\newcommand{\ouralgo}{BVL}
\newcommand{\rev}[1]{#1}
\newcommand{\orgg}[1]{}
\newcommand{\revv}[1]{#1}
\newif\ifarxiv
\newif\ifarxivfin  
\DeclareMathAlphabet{\mathcalorg}{OMS}{cmsy}{m}{n}
\newtheorem{theorem}{\bf Theorem}
\newtheorem{lemma}{\bf Lemma}
\algnewcommand\algorithmicinput{\;\;\textbf{input}:}
\algnewcommand\algorithmicoutput{\;\;\textbf{output}:}
\algnewcommand\algorithmicparameters{\;\;\textbf{parameters}:}
\algnewcommand\algorithmicglobalvariables{\;\;\textbf{global variables}:}
\algnewcommand\Input{\item[\algorithmicinput]}
\algnewcommand\Output{\item[\algorithmicoutput]}
\algnewcommand\Parameters{\item[\algorithmicparameters]}
\algnewcommand\GlobalVariables{\item[\algorithmicglobalvariables]}
\newcommand{\tr}{\text{tr}}
\newcommand{\pr}[1]{\textbf{#1}:}
\newcommand{\diag}{\text{diag}}
\newcommand{\argmin}{\mathop{\mathrm{argmin}}}
\newcommand{\rnp}{\textit{RNP}}
\newcommand{\infotrap}{\textit{InfoTrap}}
\newcommand{\obswall}{\textit{ObsWall}}
\newcommand{\forest}{\textit{Forest}}
\newcommand\smallplus{{\scaleobj{0.8}{+}}}
\newcommand\smallminus{{\scaleobj{0.8}{-}}}
\newcommand*\xbar[1]{%
	\hbox{%
		\vbox{%
			\hrule height 0.5pt 
			\kern0.3ex
			\hbox{%
				\kern-0.0em
				\ensuremath{#1}%
				\kern-0.15em
			}%
		}%
	}%
} 
\begin{document}
%
\ifarxivfin
\title{
	Bi-directional Value Learning for Risk-aware Planning Under Uncertainty: Extended Version
}
\else
\title{
	Bi-directional Value Learning for \\ Risk-aware Planning Under Uncertainty
}
\fi
\author{Sung-Kyun Kim, Rohan Thakker, and Ali-akbar Agha-mohammadi%
	\thanks{Manuscript received September 10, 2018; accepted January 30, 2019. Date of publication; date of current version. This letter was recommended for pub- lication by Associate Editor A. Faust and Editor N. Amato upon evaluation of the reviewers’ comments.
		This research is partially carried out at the Jet Propulsion Laboratory  (JPL)  and  the  California  Institute  of  Technology  (Caltech)  under  a  contract  with  the  NASA  and  funded  through the President’s and Director’s Fund 105275-18AW0056.
		\textit{(Corresponding author: Sung-Kyun Kim.)}%
	} 
	\thanks{S.-K. Kim is with the Robotics Institute, Carnegie Mellon University, 5000 Forbes Ave, Pittsburgh, PA 15215 USA (e-mail: kimsk@cs.cmu.edu).}%
	\thanks{R. Thakker and A.-A. Agha-Mohammandi are with Jet Propulsion Laboratory, Pasadena, CA 91101 USA (e-mail: rohan.a.thakker@jpl.nasa.gov; aliagha@jpl.nasa.gov).}%
	\thanks{Digital Object Identifier 10.1109/LRA.2019.2903259}
}
\markboth{IEEE Robotics and Automation Letters. Preprint Version. Accepted January 2019}
{Kim \MakeLowercase{\textit{et al.}}: Bi-directional Value Learning for Risk-aware Planning Under Uncertainty}

%




\maketitle


\begin{abstract}
	Decision-making under uncertainty is a crucial ability for autonomous systems.
	In its most general form, this problem can be formulated as a Partially Observable Markov Decision Process (POMDP). The solution policy of a POMDP can be implicitly encoded as a value function.
	In partially observable settings, the value function is typically learned via forward simulation of the system evolution. Focusing on accurate and long-range risk assessment, we propose a novel method, where the value function is learned in different phases via a 
	bi-directional search in belief space. A backward value learning process provides a long-range and risk-aware base policy. A forward value learning process ensures local optimality and updates the policy via forward simulations. We consider a class of scalable and continuous-space rover navigation problems (RNP) to assess the safety, scalability, and optimality of the proposed algorithm. The results demonstrate the capabilities of the proposed algorithm in evaluating long-range risk/safety of the planner while addressing continuous problems with long planning horizons.
\end{abstract}

\begin{IEEEkeywords}
	Learning and Adaptive Systems;
    Autonomous Agents;
    Motion and Path Planning;
    Localization
\end{IEEEkeywords}

%

\section{Introduction}
\IEEEPARstart{C}{onsider} a scenario where an autonomous mobile robot (e.g., a rover or flying drone) needs to navigate through an obstacle-laden environment under both motion and sensing uncertainty. In spite of these uncertainties, the robot needs to guarantee safety and reduce the risk of collision with obstacles at all times. This, in particular, is a challenge for safety-critical systems and fast moving robots as the vehicle traverses long distances in a short time horizon. Hence, ensuring system's safety requires risk prediction over long horizons.

The above-mentioned problem is an instance of general problem of decision-making under uncertainty in the presence of risk and constraints, which has applications in different mobile robot navigation scenarios. 
This problem in its most general and principled form can be formulated as a Partially Observable Markov Decision Process (POMDP) \cite{Kaelbling98,kochenderfer2015decision}. In particular, in this work, we focus on a challenging class of POMDPs, here referred to as RAL-POMDPs (Risk-Averse, Long-range POMDPs). A RAL-POMDP reflects some of challenges encountered in physical robot navigation problems, and is characterized with the following features:
\begin{enumerate}
    \item Long planning horizons (beyond $10^4$ steps) without discounting cost over time, i.e., safety is equally critical throughout the plan. In RAL-POMDP, the termination of the planning problem is dictated by reaching the goal (terminal) state rather than reaching a finite planning horizon.
    \item RAL-POMDP is defined via high-fidelity continuous state, action, and observation models.
    \item RAL-POMDP incorporates computationally expensive costs and constraints such as collision checking.
	\item RAL-POMDP requires quick policy updates to cope with local changes in the risk regions during execution.
\end{enumerate}

\ifarxiv
\begin{figure}[t]
\begin{center}
	\begin{minipage}{4.2cm}
		\centering
		\includegraphics[width=0.8\textwidth,clip,trim=0cm 0.7cm 0cm 0cm]{figs/ForwardValueSpace.pdf} \\
		\centering{\footnotesize{(a) Forward value space}}
	\end{minipage}
	\begin{minipage}{4.2cm}
		\centering
		\includegraphics[width=0.8\textwidth,clip,trim=0cm 0.7cm 0cm 0cm]{figs/BackwardValueSpace.pdf} \\
		\centering{\footnotesize{(b) Backward value space}}
	\end{minipage}
	\caption{Illustration of bi-directional value learning.
		For a given belief space, the first thread expands \textit{forward value space} from the start as its belief tree grows.
		This provides a locally near-optimal policy.
		The second thread constructs \textit{backward value space} connected to the goal by solving belief MDP in a sampled subspace.
		This returns an approximate global policy which can be used to guide the forward search.
	}
	\label{fig:reachable}
\end{center}
\end{figure}

\else  

\begin{figure}[t]
\rev{
\begin{center}
	\begin{minipage}{0.29\columnwidth}
		\centering
		\includegraphics[width=\textwidth]{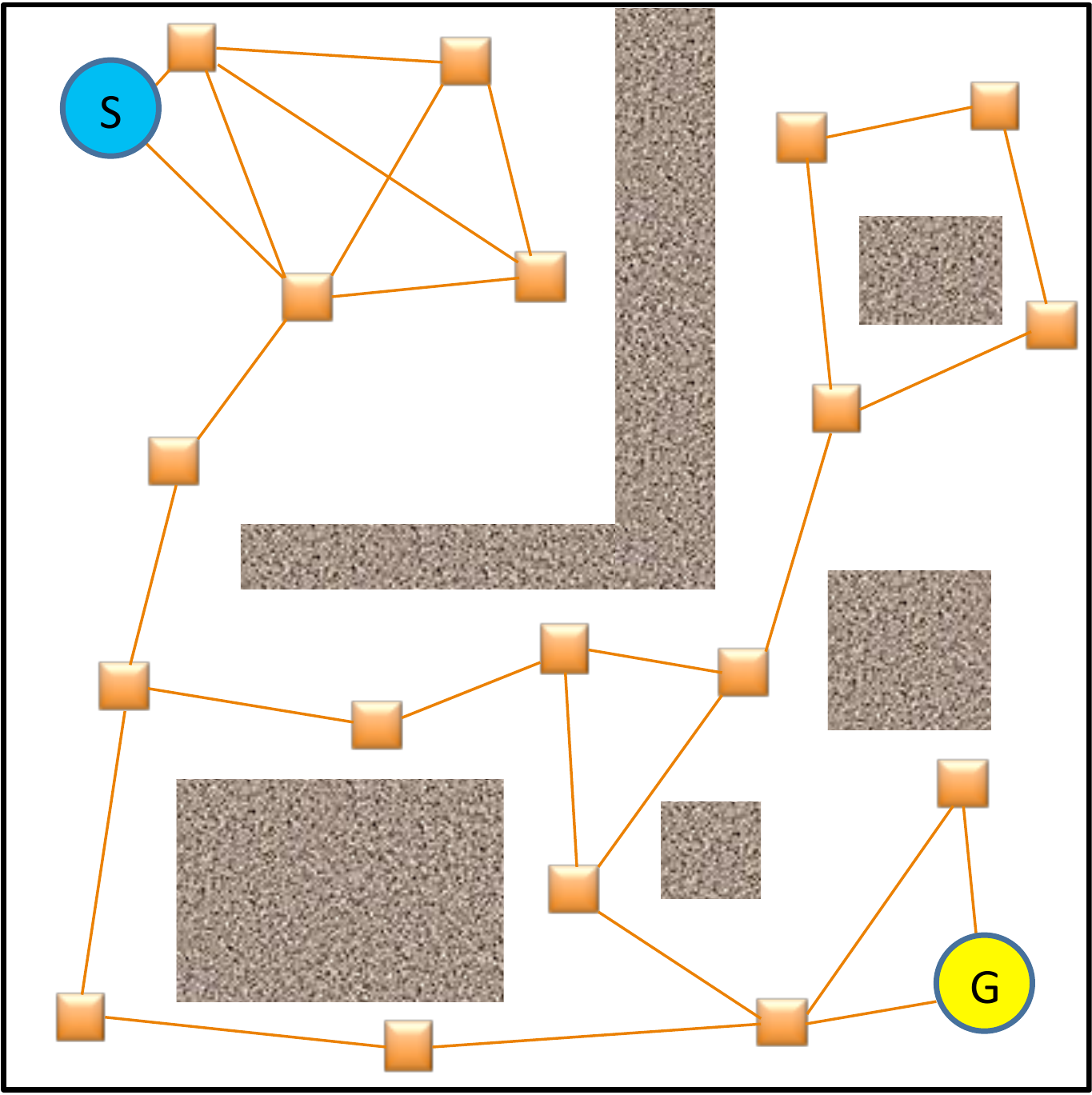} \\
		\includegraphics[width=\textwidth]{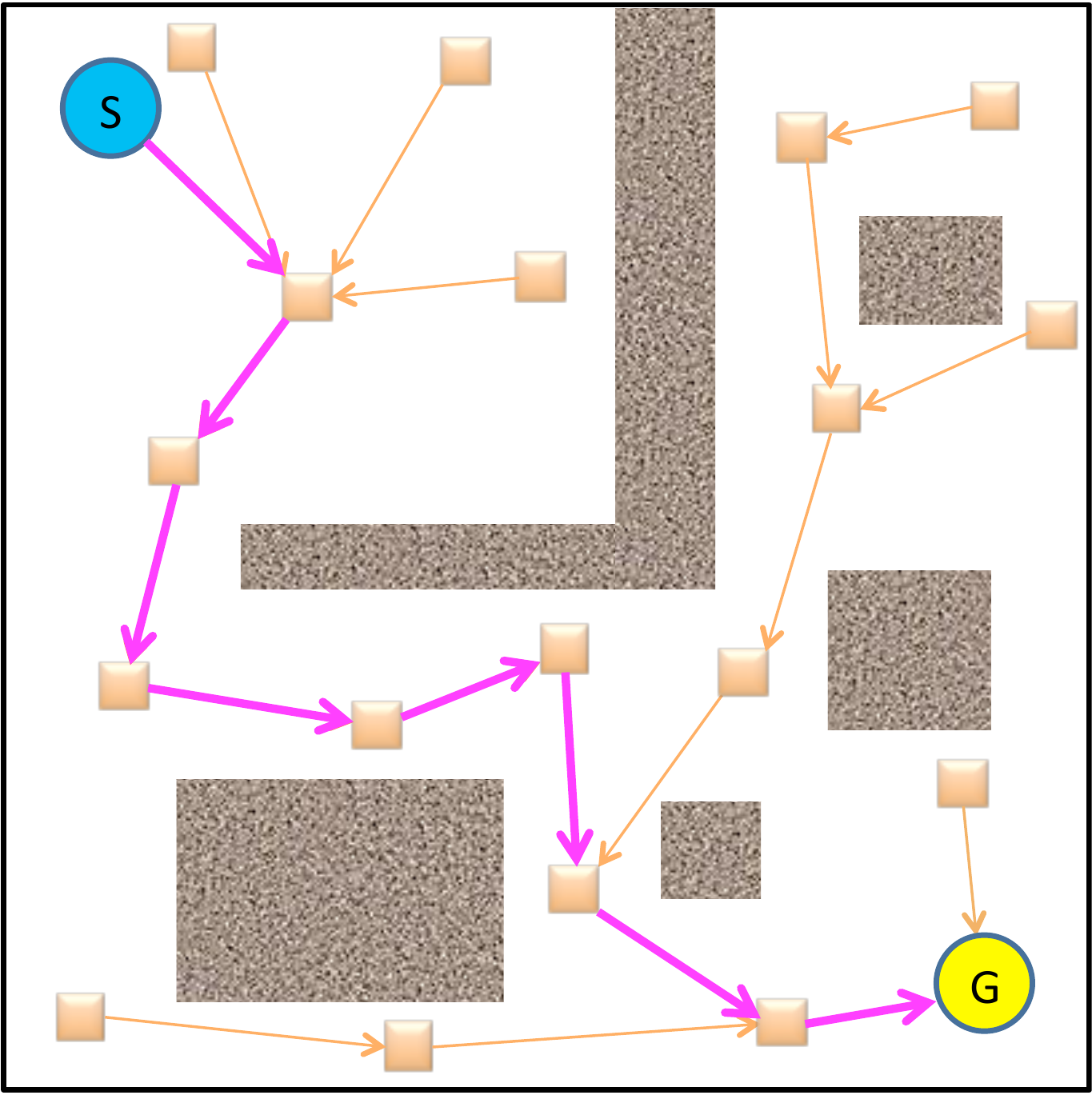} \\
		\footnotesize{\centering{(a) Backward long-range solver}}
	\end{minipage}
	\begin{minipage}{0.29\columnwidth}
		\centering
		\includegraphics[width=\textwidth]{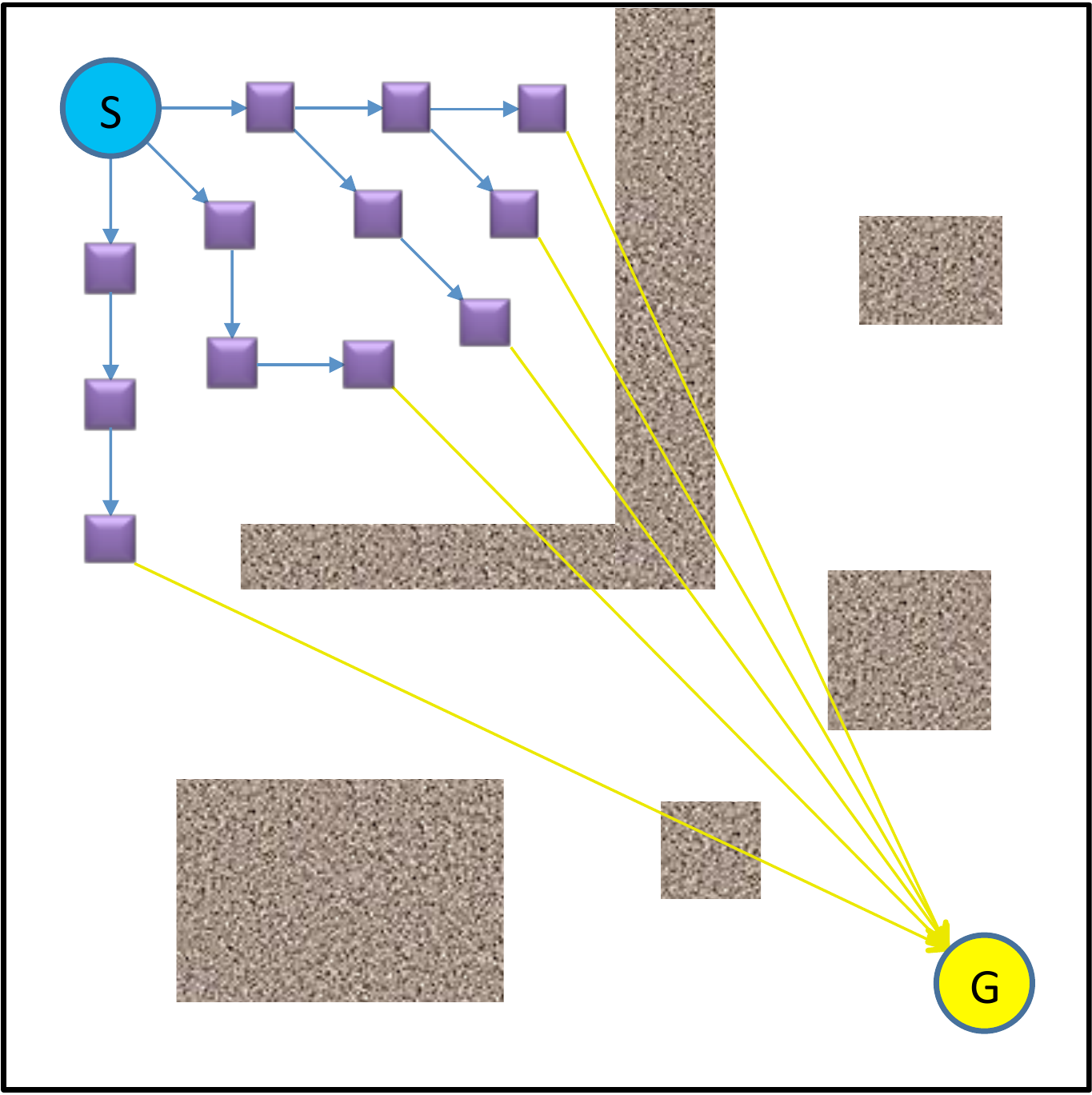} \\
		\includegraphics[width=\textwidth]{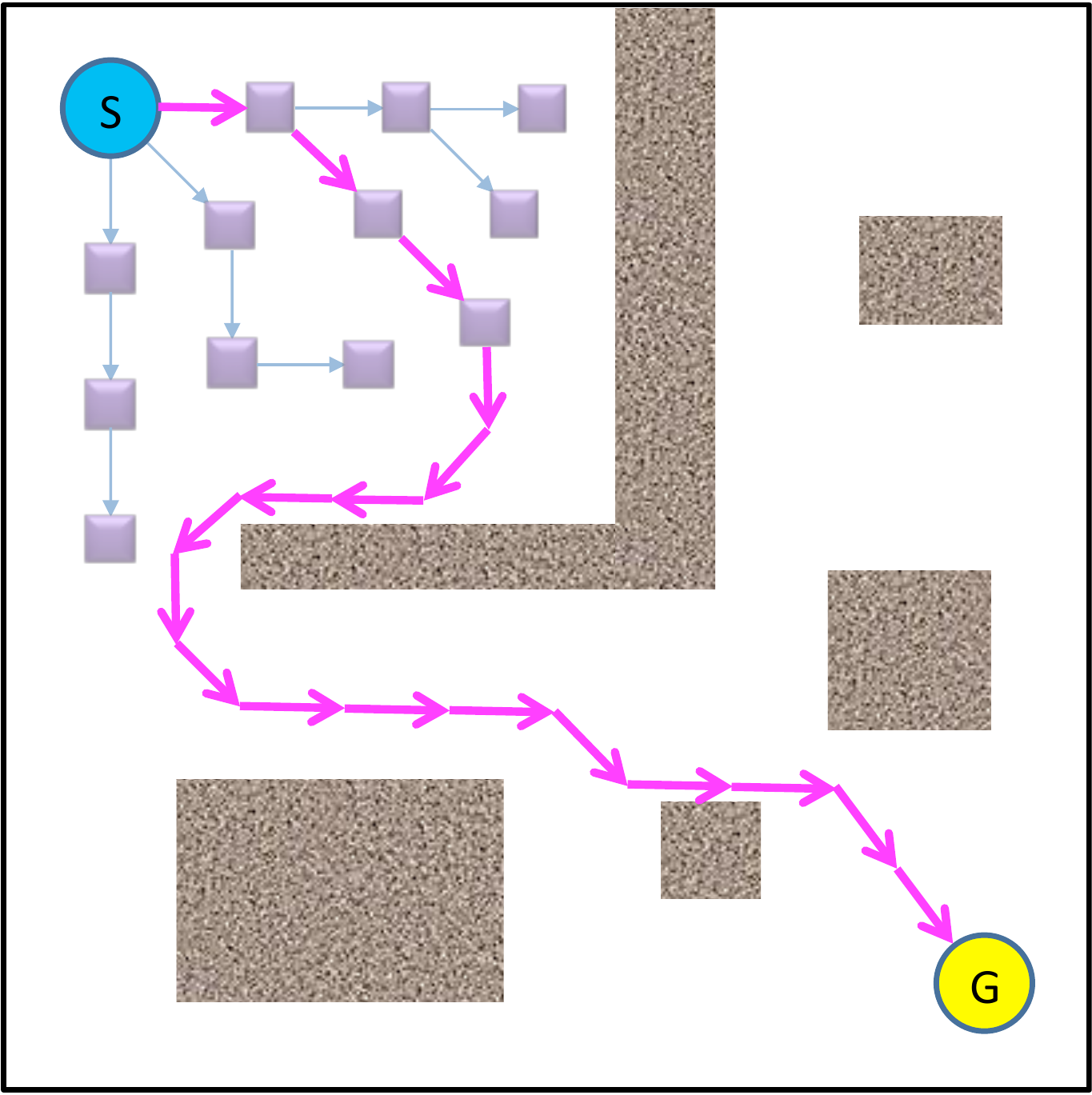} \\
		\footnotesize{\centering{(b) Forward short-range solver}}
	\end{minipage}
	\begin{minipage}{0.29\columnwidth}
		\centering
		\includegraphics[width=\textwidth]{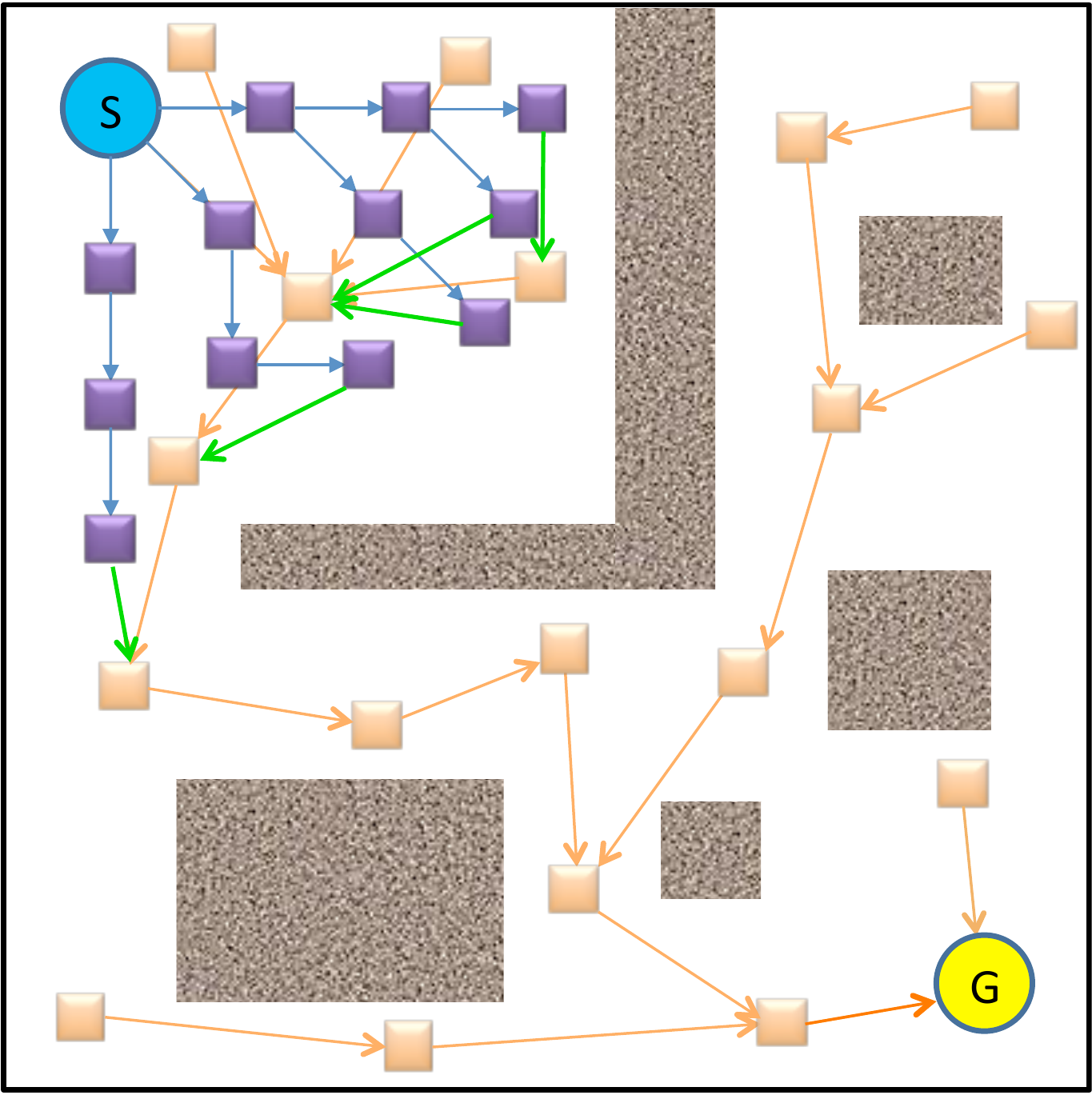} \\
		\includegraphics[width=\textwidth]{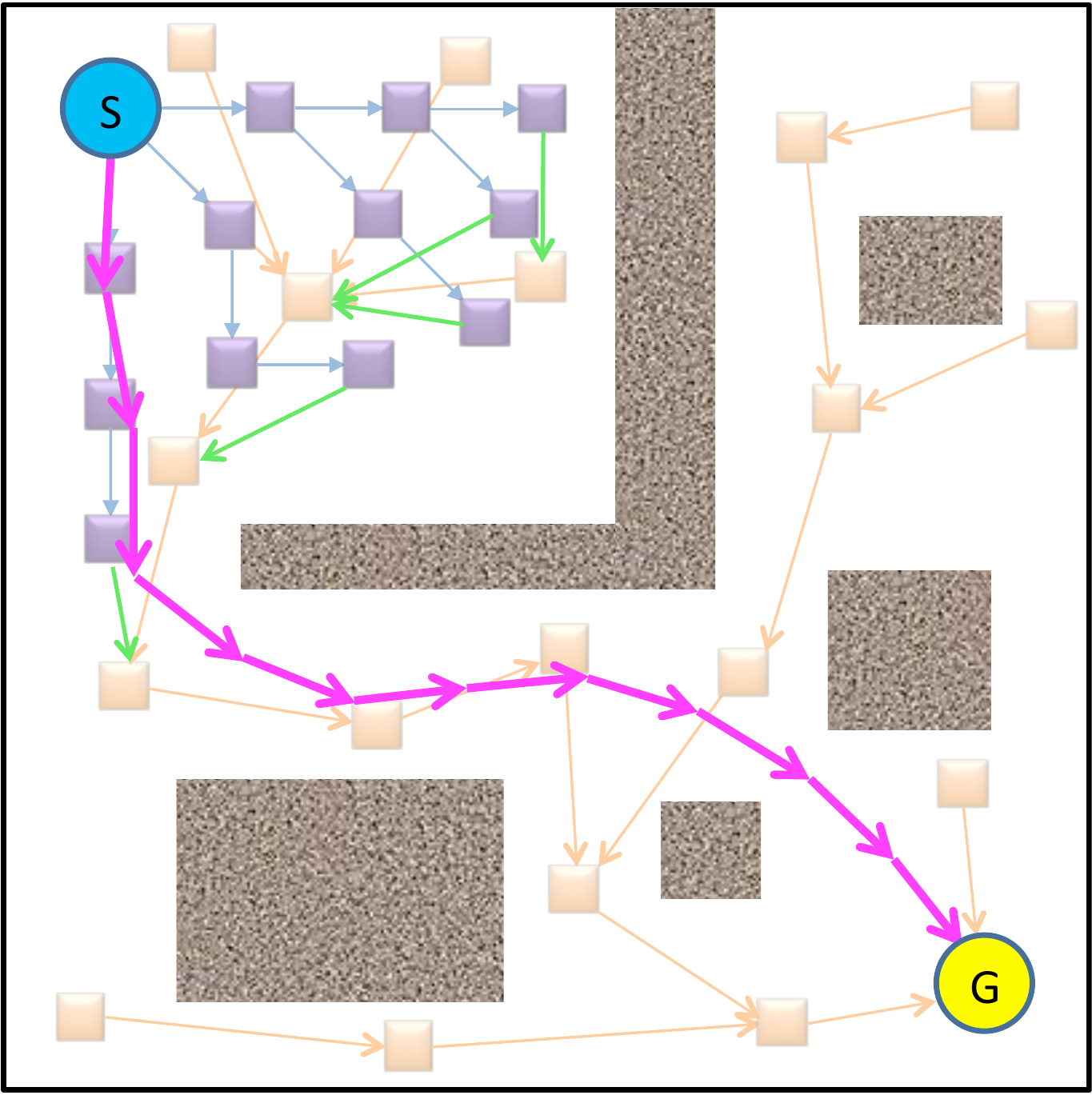} \\
		\footnotesize{\centering{(c) Combined bi-directional solver}}
		\vspace{0.2em}
	\end{minipage}
	\caption{\rev{
			Illustration of the planning procedure (top row) and the most-likely execution path (magenta arrows in the bottom row) of each solver.
			\,(a) Backward long-range solver constructs a belief graph (orange lines) and solves for an approximate global policy (orange~arrows) \textit{to the goal}.
			Its solution is suboptimal due to the finite number of sampling.
			\,(b) Forward short-range solver constructs a belief tree (blue arrows) \textit{from the current belief} up to its finite horizon and uses heuristic estimates (yellow arrows) of costs-to-go to generate a locally near-optimal policy.
			Its solution may suffer from local minima due to its finite planning horizon.
			\,(c) Bi-directional solver combines a forward short-range solver and a backward long-range solver by \textit{bridging} (green arrows) the forward belief tree (blue arrows) to the approximate global policy (orange arrows). 
			Thus, it can provide a solution with improved scalability and performance.
		}}
	\label{fig:overview}
\end{center}
}  
\vspace{-15pt}
\end{figure}

\fi  

In recent years, value learning in partially observable settings has seen impressive advances in terms of the complexity and size of solved problems.
\ifarxiv
There are two major classes of POMDP solvers (see Fig.~\ref{fig:reachable}).
\else
There are two major classes of POMDP solvers (see Fig.~\ref{fig:overview}).
\fi
The first class is forward search methods \cite{Kurniawati08-SARSOP,Pineau03,silver2010monte,gelly2011monte}. Methods in this class (offline and online variants) typically rely on forward simulations to search the reachable belief space from a given starting belief and learn the value function. POMCP (Partially Observable Monte Carlo Planning) \cite{silver2010monte}, DESPOT \cite{somani2013despot}, and ABT \cite{kurniawati2016online} are a few examples of methods in this class that can efficiently learn and update the policy while executing a plan using Monte Carlo simulation.

The second class is approximate long-range solvers such as FIRM (Feedback-based Information RoadMap) \cite{Prentice09,Ali14-IJRR}. These methods typically address continuous POMDPs but under the Gaussian assumption. They typically rely on graph construction and feedback controllers to solve larger problems. Through offline planning, they can learn an approximate value function on the representative (sampled) graph. 


The features of RAL-POMDP problems make them a challenging class of POMDPs for above-mentioned solvers. Forward search-based methods typically require cost discounting and a limited horizon (shorter than 100 steps) to be able to handle the planning problem. Also, they typically require at least one of the state, action, or observation space to be discrete. Continuous approximate long-range methods suffer from suboptimality since actions are generated based on a finite number of local controllers due to the underlying sparse sampling-based structure. 

This work addresses RAL-POMDP problems induced by fast-moving robot navigation in safety-critical scenarios. In such systems, several seconds of operation can translate to thousands of decision making steps. The main objective of this work is to provide probabilistic safety guarantees for the long-horizon decision making process (beyond thousands of steps). The second objective of this work is to generate solutions for RAL-POMDPs that are closer to the globally optimal solution compared to the state-of-the-art methods. In parallel to these objectives, we intend to satisfy other requirements of the RAL-POMDP such as incorporating high-fidelity continuous dynamics and sensor models into the planning.

In this paper, we propose Bi-directional Value Learning (\ouralgo{}) method, a POMDP solver that searches the belief space and learns the value function in a bi-directional manner. In the one thread (can be performed offline) we learn a risk-aware approximate value function backwards from the goal state toward the starting point.
In the second thread (performed online), we expand a forward search tree from the start toward the goal. 
\ouralgo{} significantly improves the performance (optimality) of the backward search methods by locally updating the policy through rapid online forward search during the actual execution.
\ouralgo{} also enhances the probabilistic guarantees on system's safety by performing computationally intensive processes, such as collision checking, over long planning horizons in the offline phase.

In Section II, we go over the formal definition of POMDP problems and explain more details about RAL-POMDP problems.
In Section III, we present the overall framework of \ouralgo{} and its concrete instance based on POMCP \cite{silver2010monte} and FIRM \cite{Ali14-IJRR}.
Section IV provides various simulation experiments to validate the \ouralgo{} method,
and Section V concludes this paper.

\section{Preliminaries}


\subsection{POMDP Problems} \label{ssec:pomdp}
Let us denote the system state, action, and observation at the $k$-th time step by $x_k \in \mathbb{X}$, $u_k \in \mathbb{U}$, and $z_k \in \mathbb{Z}$. The motion model $f$ and observation model $h$ can be written as:
\begin{gather}
	x_{k+1} = f(x_k, u_k, w_k), ~ w_k \sim p(w_k|x_k, u_k) \\
	z_{k} = h(x_{k}, v_{k}), ~ v_{k} \sim p(v_{k}|x_{k})
\end{gather}
where $w_k$ and $v_k$ denote the motion and sensing noises.

A belief state $b \in \mathbb{B}$ is a posterior distribution over all possible states given the past actions and observations $b_{k} = p(x_{k} | z_{0:k}, u_{0:k-1})$, which can be updated recursively via Bayesian inference:
\begin{align}
	b_{k+1} &= \tau(b_k, u_k, z_{k+1})
\end{align}

A policy $\pi : \mathbb{B} \rightarrow \mathbb{U}$ maps each belief state $b$ to a desirable action $u$.
Denoting the one-step cost function as $c(b, u) \in \mathbb{R}_{> 0}$,
the value function (or more precisely, the expected cost-to-go function) under policy $\pi$ is defined as follows.
\begin{align}
  J(b; \pi) 
  & = \mathbb{E} \left[ \sum_{k=0}^\infty \gamma^k c(b_k, \pi(b_k)) \right]
	\label{eq:trueJ}
	\\
	& = c(b, \pi(b)) + \gamma \sum_{b' \in \mathbb{B}} p(b' | b, \pi(b)) J(b'; \pi)
	\label{eq:trueJrecursive}
\end{align}
where $b_0 = b$,
$\gamma \in (0, 1]$ is a discount factor that reduces the effect of later costs,
and $p(b' | b, u)$ is the transition probability from $b$ to $b'$ under action $u$.
Equation (\ref{eq:trueJrecursive}) in a recursive form is called a Bellman equation.
\rev{
	It is also convenient to define an intermediate belief-action function, or Q-value, as $Q(b, u; \pi) = c(b, u) + \gamma \sum_{b' \in \mathbb{B}} p(b' | b, u) J(b'; \pi)$,
	such that
	\begin{align}
		J(b; \pi) = \min_{u \in \mathbb{U}} Q(b, u; \pi)
		\label{eq:minq}
	\end{align}
}  

A POMDP problem can then be cast as finding the optimal value and policy.
\begin{align}
	\pi^*(b) = \argmin_{\pi\in\Pi} J(b; \pi),~\forall b \in \mathbb{B}
	\label{eq:optonline}
\end{align}
\ifarxiv
\fi




\subsection{RAL-POMDP} \label{ssec:ral-pomdp}
In this work, we focus on a RAL-POMDP as a special case of the above-mentioned POMDP problem. Formally, in a RAL-POMDP, $\mathbb{X}$, $\mathbb{U}$, and $\mathbb{Z}$ are continuous spaces, and $f$ and $h$ represent locally differentiable nonlinear mappings.
There exists a goal termination set $B^{goal}\subset\mathbb{B}$ such that $J(b_g)=0$ for $\forall b_g \in B^{goal}$.
There also exists a failure termination set $F\subset \mathbb{B}$ which represents the risk region (e.g., obstacles in robot motion planning)
such that $J(b_f) \to \infty$ for $\forall b_f \in F$.
As the risk is critical throughout the plan, a RAL-POMDP does not allow cost discounting, i.e., $\gamma=1$.


\rev{

In our risk metric discussion, we follow definitions in \cite{ruszczynski2010risk,majumdar2017should}. Accordingly, our risk metric falls in the category of risk for sequential decision making with deterministic policies, satisfying 
\textit{time-consistency} (see \cite{ruszczynski2010risk,majumdar2017should} for details).  
Specifically, we formalize the risk by compounding the failure probability, $p(F | b, \pi(b)) = \sum_{b_f \in F} p(b_f | b, \pi(b))$, of each action along the sequence.
Accordingly, the risk metric of a policy given a belief $b_0$ is measured as follows.
\begin{align}
	\rho(b_0; \pi) = 1 - \mathbb{E} \left[ \prod_{k=0}^{\infty} (1 - p(F | b_k, \pi(b_k) ) \right]
	\label{eq:riskdef}
\end{align}
The second term on the right-hand side is the expected probability to reach the goal without hitting the risk region.
Note that $\rho(b_g; \pi) = 0$ for $\forall b_g\in B^{goal}$ and  $\rho(b_f; \pi) = 1$ for $\forall b_f \in F$ for $\forall \pi \in \Pi$.
It can be rewritten in a recursive form:
\ifarxivfin
\begin{align}
	\rho(b; \pi) &= 1 - \sum_{b' \in \mathbb{B}} p(b' | b, \pi(b)) (1  - \rho(b'; \pi))
	\nonumber \\
	&= \sum_{b' \in \mathbb{B}} p(b' | b, \pi(b)) \rho(b'; \pi)
	\label{eq:risk}
\end{align}
\else
\begin{align}
	\rho(b; \pi) 
	&= \sum_{b' \in \mathbb{B}} p(b' | b, \pi(b)) \rho(b'; \pi)
	\label{eq:risk}
\end{align}
\fi
}

\ifarxivfin

Now we show that in RAL-POMDPs where $J(b_f) = J^F \to \infty$ for $\forall b_f \in F$, the optimal policy $\pi^*$ in Eq. (\ref{eq:optonline}) also minimizes $\rho(b; \pi^*)$ in Eq. (\ref{eq:risk}) for $\forall b \in \mathbb{B}$.


\begin{lemma}
	In RAL-POMDPs where $J^F \to \infty$ and $\gamma = 1$, the following is satified for $\forall b \in \mathbb{B}$.
	\begin{align}
		\rho(b; \pi) = \lim_{J^F\to\infty} \frac{J(b;\pi)}{J^F}
		\label{eq:equiv}
	\end{align}
	\label{lemma:equiv}
\end{lemma}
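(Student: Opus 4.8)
The plan is to show that, for a fixed policy $\pi$ and a \emph{finite} failure penalty $J^F$, the value function splits as $J(b;\pi)=C(b;\pi)+J^F\rho(b;\pi)$, where $C(b;\pi)$ does not depend on $J^F$; dividing by $J^F$ and letting $J^F\to\infty$ then annihilates the $C$ term and leaves $\rho$.

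First I would unroll the Bellman recursion (\ref{eq:trueJrecursive}) with $\gamma=1$ and the terminal values $J(b_g;\pi)=0$, $J(b_f;\pi)=J^F$. Let $\tau$ be the first time the trajectory generated by $\pi$ enters the terminal set $B^{goal}\cup F$. Linearity of expectation then yields
\begin{align}
  J(b;\pi) = \underbrace{\mathbb{E}\Big[\textstyle\sum_{k=0}^{\tau-1} c(b_k,\pi(b_k))\Big]}_{=:\,C(b;\pi)} + J^F\,\underbrace{\Pr(b_\tau\in F)}_{=:\,P_F(b;\pi)}.
  \label{eq:decomp}
\end{align}
The decisive point is that $C(b;\pi)$ accumulates only the running costs incurred before termination and is therefore independent of the terminal penalty $J^F$.

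Second, I would identify $P_F$ with $\rho$: the absorption probability into $F$ satisfies $P_F(b;\pi)=\sum_{b'}p(b'\mid b,\pi(b))\,P_F(b';\pi)$ with $P_F(b_g;\pi)=0$ and $P_F(b_f;\pi)=1$, which is exactly the recursion (\ref{eq:risk}) with the stated boundary values, so $P_F\equiv\rho$. Dividing (\ref{eq:decomp}) by $J^F$ and taking the limit gives
\begin{align}
  \lim_{J^F\to\infty}\frac{J(b;\pi)}{J^F}
  = \lim_{J^F\to\infty}\frac{C(b;\pi)}{J^F} + \rho(b;\pi)
  = \rho(b;\pi),
\end{align}
since $C(b;\pi)$ is finite and independent of $J^F$, which establishes (\ref{eq:equiv}).

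The main obstacle is the legitimacy of the decomposition (\ref{eq:decomp}), which rests on $\pi$ being \emph{proper}---reaching $B^{goal}\cup F$ in finite time almost surely with finite expected running cost $C(b;\pi)<\infty$ for every non-failure belief $b$. This is also exactly the condition under which $J(b;\pi)$ in (\ref{eq:trueJ}) is itself finite for finite $J^F$, so it is the natural regime for the statement, and I would either assume it outright or restrict $\Pi$ to such policies. As a check that avoids the unrolling, one can instead normalize (\ref{eq:trueJrecursive}) directly: writing $\tilde J:=J/J^F$ gives $\tilde J(b;\pi)=c(b,\pi(b))/J^F+\sum_{b'}p(b'\mid b,\pi(b))\,\tilde J(b';\pi)$, and letting $J^F\to\infty$ drives the running-cost term to zero, so the pointwise limit of $\tilde J$ obeys the $\rho$-recursion (\ref{eq:risk}) with the same boundary values $0$ and $1$; uniqueness of the solution of this linear fixed-point system then forces the limit to equal $\rho$. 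The only delicate point along this second route is passing the limit through the belief-space integral, which I would justify by dominated convergence in the proper regime.
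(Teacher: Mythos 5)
Your proof is correct, but it takes a genuinely different route from the paper's. The paper argues by \emph{backward induction} directly on the two recursions: it first verifies Eq.~(\ref{eq:equiv}) at terminal beliefs ($0$ on $B^{goal}$, $1$ on $F$), then for beliefs all of whose successors are terminal, and then, assuming every successor of $b$ under $\pi$ satisfies Eq.~(\ref{eq:equiv}), substitutes that hypothesis into the risk recursion (\ref{eq:risk}) and divides the Bellman equation (\ref{eq:trueJrecursive}) by $J^F$ to propagate the identity one step backward. You instead establish the global decomposition $J(b;\pi)=C(b;\pi)+J^F\,P_F(b;\pi)$ via the stopping time $\tau$, identify $P_F$ with $\rho$ through the shared linear recursion and boundary values, and let $J^F\to\infty$ annihilate the $C$ term. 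What your route buys is transparency about the mechanism --- the penalty term scales with $J^F$ while the accumulated running cost stays bounded --- and, importantly, an explicit statement of the hypothesis that \emph{both} proofs actually need: properness of $\pi$, i.e., almost-sure absorption into $B^{goal}\cup F$ with finite expected running cost. The paper's induction needs the same hypothesis implicitly, since backward induction only reaches beliefs from which the terminal set is hit in a finite number of steps and the paper offers no argument that such beliefs exhaust $\mathbb{B}$; it simply never names the assumption. What the paper's route buys is economy: it never leaves the recursions, requires no stopping-time or dominated-convergence machinery, and reads as a direct verification. Your alternative fixed-point argument (normalizing the Bellman equation and invoking uniqueness of the solution of the $\rho$-recursion) is likewise sound precisely in the proper regime, where that uniqueness holds, so both of your variants and the paper's proof rest on the same underlying regularity condition.
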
	
%
\begin{proof}
We prove this by \textit{backward induction}. 

Consider the terminal beliefs first.
Trivially, from Eq. (\ref{eq:trueJ}) and Eq. (\ref{eq:riskdef}),
$\rho(b_g; \pi) = \lim_{J^F\to\infty} \frac{J(b_g;\pi)}{J^F} = 0$ for $\forall b_g \in B^{goal}$,
and $\rho(b_f; \pi) = \lim_{J^F\to\infty} \frac{J(b_f;\pi)}{J^F} = 1$ for $\forall b_f \in F$.
Thus, Eq. (\ref{eq:equiv}) is satified for terminal beliefs.

Next, consider a belief such that its every successor 
is either $b_g \in B^{goal}$ or $b_f \in F$, i.e., $\sum_{b_g \in B^{goal}} p(b_g|b, \pi(b)) + \sum_{b_f \in F} p(b_f|b, \pi(b)) = 1$.
Then from Eq. (\ref{eq:risk}),
\begin{align}
	\rho(b; \pi) &= \sum_{b_f \in F} p(b_f|b, \pi(b)) \cdot 1
\end{align}
and from Eq. (\ref{eq:trueJrecursive}) with $\gamma=1$ we have:
\begin{align}
	\lim_{J^F\to\infty} \frac{J(b_f;\pi)}{J^F} &= \lim_{J^F\to\infty} \frac{\sum_{b_f \in F}p(b_f | b, \pi(b)) \cdot J^F}{J^F}
\end{align}
Thus, all such belief $b$ satisfies Eq. (\ref{eq:equiv}).


Now we consider a belief $b$ such that its all successors $\{b' | b, \pi(b)\}$ satisfy Eq. (\ref{eq:equiv}).
By injecting Eq. (\ref{eq:equiv}) for the successors into Eq. (\ref{eq:risk}), 
\begin{align}
	\rho(b; \pi) 
	&= \sum_{b' \in \mathbb{B}} \lim_{J^F\to\infty} p(b' | b, \pi(b)) \frac{J(b';\pi)}{J^F}
\end{align}
By dividing Eq. (\ref{eq:trueJrecursive}) by $J^F$,
\begin{align}
	\lim_{J^F\to\infty} \frac{J(b; \pi)}{J^F} &= \lim_{J^F\to\infty} \frac{1}{J^F} \sum_{b' \in \mathbb{B}} p(b' | b, \pi(b)) J(b'; \pi)
\end{align}
Thus, it satisfies Eq. (\ref{eq:equiv}).

Finally, by backward induction, Eq. (\ref{eq:equiv}) is satisfied for $\forall b \in \mathbb{B}$ in RAL-POMDPs.
\end{proof}



\begin{theorem}
	In RAL-POMDPs where $J^F \to \infty$ and $\gamma = 1$, the optimal policy $\pi^*$ that minimizes $J(b; \pi^*)$ also minimizes $\rho(b; \pi^*)$ for $\forall b \in \mathbb{B}$.
\end{theorem}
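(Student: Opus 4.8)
The plan is to let Lemma~\ref{lemma:equiv} do essentially all of the work, since it already establishes the exact proportionality $\rho(b;\pi) = \lim_{J^F\to\infty} J(b;\pi)/J^F$ between the risk metric and the cost-to-go. Given this identity, the theorem reduces to the elementary observation that dividing by the positive scalar $J^F$ and then passing to the limit $J^F\to\infty$ preserves the ordering of policies by their value.

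Concretely, I would fix an arbitrary belief $b\in\mathbb{B}$ and an arbitrary competing policy $\pi\in\Pi$, and start from the defining optimality of $\pi^*$ in Eq.~(\ref{eq:optonline}), namely $J(b;\pi^*)\le J(b;\pi)$. Since $J^F>0$ at every finite value, I would divide both sides by $J^F$ to obtain $J(b;\pi^*)/J^F \le J(b;\pi)/J^F$, and then take $J^F\to\infty$. Because non-strict inequalities are preserved under limits, this yields $\lim_{J^F\to\infty} J(b;\pi^*)/J^F \le \lim_{J^F\to\infty} J(b;\pi)/J^F$, which by Lemma~\ref{lemma:equiv} is exactly $\rho(b;\pi^*)\le\rho(b;\pi)$. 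Since $b$ and $\pi$ were arbitrary, this shows $\pi^*$ minimizes $\rho(b;\pi)$ over all policies for every belief, which is the claim.

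The step requiring the most care, and which I expect to be the main obstacle, is justifying that $\pi^*$ may be treated as the $J$-minimizer \emph{uniformly} as $J^F$ grows, because the cost-to-go $J(b;\pi)$ itself depends on $J^F$ through the terminal cost assigned to the failure set $F$, whereas $\rho$ in Eq.~(\ref{eq:riskdef}) does not. The comparison $J(b;\pi^*)\le J(b;\pi)$ must therefore hold along the sequence of increasing $J^F$ values whose limit defines the RAL-POMDP, rather than at a single fixed instance. I expect this to be benign in the RAL-POMDP regime: $\pi^*$ is by construction the optimal policy in the limit $J^F\to\infty$, so I would argue that for all sufficiently large $J^F$ the optimal policy stabilizes to $\pi^*$ (either because the policy space is effectively finite in the sampled representation, or via a monotonicity argument on the parameterized family of value functions), which legitimizes moving the limit inside the comparison. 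Once that is secured, the remainder is the one-line division-and-limit argument above.
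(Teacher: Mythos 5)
Your proposal is correct and follows essentially the same route as the paper's own proof: both reduce the theorem to Lemma~\ref{lemma:equiv} and then observe that dividing $J(b;\pi)$ by the positive constant $J^F$ and passing to the limit $J^F\to\infty$ preserves the ordering (equivalently, the $\argmin$) over policies. The uniformity issue you flag in your last paragraph --- that $J(b;\pi)$ itself depends on $J^F$, so the comparison must hold along the sequence of growing $J^F$ --- is real, but the paper's proof glosses over it in exactly the same way, simply rewriting Eq.~(\ref{eq:optonline}) with the limit inside the objective before dividing by $J^F$.
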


\begin{proof}
    First, we can rewrite Eq. (\ref{eq:optonline}) as follows for RAL-POMDPs where $J^F \to \infty$.
	\begin{align}
		\pi^*(b) = \argmin_{\pi \in \Pi} \lim_{J^F \to \infty} J(b;\pi),~\forall b \in \mathbb{B}
	\end{align}
	By dividing the objective function in Eq. (\ref{eq:optonline}) by a constant $J^F$, we have: 
	\begin{align}
		\pi^*(b) = \argmin_{\pi \in \Pi} \lim_{J^F \to \infty} \frac{J(b; \pi)}{J^F},~\forall b \in \mathbb{B}
	\end{align}
	Then by Lemma~\ref{lemma:equiv}, we prove the theorem.
	\begin{align}
		\pi^*(b) = \argmin_{\pi \in \Pi} \rho(b; \pi),~\forall b \in \mathbb{B}
	\end{align}
\end{proof}

\else  


\begin{theorem}
	In RAL-POMDPs where $J(b_f) = J^F \to \infty$ for $\forall b_f \in F$ and $\gamma = 1$, the optimal policy $\pi^*$ in Eq. (\ref{eq:optonline}) that minimizes $J(b; \pi^*)$ also minimizes $\rho(b; \pi^*)$ in Eq. (\ref{eq:risk}) for $\forall b \in \mathbb{B}$.
\end{theorem}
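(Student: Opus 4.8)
The plan is to reduce the risk-minimization problem to the value-minimization problem by exhibiting a fixed proportionality between $J$ and $\rho$ in the relevant limit. Concretely, I would first establish the identity
\begin{align}
	\rho(b; \pi) = \lim_{J^F \to \infty} \frac{J(b; \pi)}{J^F}, \qquad \forall b \in \mathbb{B}, \ \forall \pi \in \Pi.
	\label{eq:plan-identity}
\end{align}
Granting this, the theorem follows in a few lines: for a fixed instance $J^F$ is a positive constant, so dividing the objective in (\ref{eq:optonline}) by $J^F$ does not change the minimizer, and passing to the limit turns the rescaled value into the risk by (\ref{eq:plan-identity}). Hence $\argmin_{\pi} J(b;\pi) = \argmin_{\pi} J(b;\pi)/J^F = \argmin_{\pi} \rho(b;\pi)$ for every $b$, which is exactly the claim.

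To prove the identity (\ref{eq:plan-identity}) I would use backward induction seeded at the two terminal sets. For the base case, a goal belief gives $J(b_g;\pi)=0$ and $\rho(b_g;\pi)=0$, while a failure belief gives $J(b_f;\pi)=J^F$ and $\rho(b_f;\pi)=1$, so in both cases $J/J^F$ and $\rho$ coincide in the limit, directly from (\ref{eq:trueJ}) and (\ref{eq:riskdef}). For the inductive step, I would take a belief $b$ whose successors $b'$ under $\pi$ all satisfy (\ref{eq:plan-identity}), substitute this hypothesis into the recursive risk equation (\ref{eq:risk}), and compare it with the Bellman recursion (\ref{eq:trueJrecursive}) divided by $J^F$. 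Because $\gamma = 1$ and both recursions carry the identical transition kernel $p(b'\mid b,\pi(b))$, the one-step cost contributes $c(b,\pi(b))/J^F \to 0$ as $J^F \to \infty$, while the successor expectation reproduces precisely the right-hand side of (\ref{eq:risk}); this closes the induction.

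The step I expect to be the main obstacle is making the backward induction legitimate in a continuous, horizon-free belief space, where there is no canonical final stage to anchor the argument. To handle this I would equip the reachable beliefs with a well-founded ordering — for example by expected number of steps to reach $B^{goal}\cup F$ under $\pi$ — and assume proper termination (almost-sure reachability of the terminal sets), so that every belief is grounded in the terminal base cases. A related technical point is exchanging the limit $J^F\to\infty$ with the sum over successor beliefs in (\ref{eq:trueJrecursive}); this is justified since the transition probabilities sum to one and each rescaled term $J(b';\pi)/J^F$ stays bounded in the regime of interest, so linearity of the (finite-mass) expectation together with dominated convergence lets the limit pass inside.
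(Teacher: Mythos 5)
Your proposal is correct and follows essentially the same route as the paper: the identity $\rho(b;\pi) = \lim_{J^F\to\infty} J(b;\pi)/J^F$ is exactly the paper's Lemma~\ref{lemma:equiv}, proved there by the same backward induction anchored at $B^{goal}$ and $F$, and the theorem is then obtained, as you do, by dividing the objective in Eq.~(\ref{eq:optonline}) by the constant $J^F$ and invoking the identity. Your added care about well-foundedness of the induction in a continuous belief space and about interchanging the limit with the successor sum goes beyond what the paper makes explicit, but it does not change the structure of the argument.
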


See \cite{kim2019arxiv} for the proof.


\fi  

\section{Bi-directional Value Learning (\ouralgo{})}


\ifarxiv

\begin{figure}[t]
\rev{
\begin{center}
	\begin{minipage}{0.29\columnwidth}
		\centering
		\includegraphics[width=\textwidth]{figs/overview1.pdf} \\
		\includegraphics[width=\textwidth]{figs/overview2.pdf} \\
		\footnotesize{\centering{(a) Backward long-range solver}}
	\end{minipage}
	\begin{minipage}{0.29\columnwidth}
		\centering
		\includegraphics[width=\textwidth]{figs/overview3.pdf} \\
		\includegraphics[width=\textwidth]{figs/overview4.pdf} \\
		\footnotesize{\centering{(b) Forward short-range solver}}
	\end{minipage}
	\begin{minipage}{0.29\columnwidth}
		\centering
		\includegraphics[width=\textwidth]{figs/overview5.pdf} \\
		\includegraphics[width=\textwidth]{figs/overview6.pdf} \\
		\footnotesize{\centering{(c) Combined bi-directional solver}}
		\vspace{0.2em}
	\end{minipage}
	\caption{\rev{
			Illustration of planning procedure (top row) and the most-likely execution path (magenta arrorws in the bottom row) of each solver.
			(a) Backward long-range solver constructs a belief graph (orange lines) by random sampling and solves for an approximate global policy (orange arrows) \textit{to the goal}.
			Its solution is suboptimal due to the finite number of sampling.
			(b) Forward short-range solver constructs a belief tree (blue arrows) \textit{from the current belief} up to its finite horizon and uses heuristic estimates (yellow arrows) of costs-to-go to generate a locally near-optimal policy.
			Its solution may suffer from local minima due to its finite planning horizon.
			(c) Bi-directional solver combines a forward short-range solver and a backward long-range solver by \textit{bridging} (green arrows) the forward belief tree (blue arrows) to the approximate global policy (orange arrows) on the belief graph.
			Thus, it can provide a solution with improved scalability and performance.
		}}
	\label{fig:overview}
\end{center}
}  
\end{figure}

\fi

	\subsection{Overall Framework}


	In this section, we provide the framework of \ouralgo{}, the proposed \textit{bi-directional long-short-range} POMDP solver, and its concrete instance based on POMCP \cite{silver2010monte} and FIRM \cite{Ali14-IJRR}.
	Figure~\ref{fig:overview} conceptually shows how the combination of the forward short-range and backward long-range planner works.
	The short-range solver relies on the knowledge of the initial belief and is limited to its reachable belief subspace. It can find a locally near-optimal policy but may get stuck in local minima in the global perspective.
	The long-range solver can provide a global policy to reach to the goal, but it only considers (sampled) subspace, which results in the solution suboptimality.
	The main idea of \ouralgo{} is to develop a bridging scheme between these two approaches to take advantage of both solvers while alleviating their drawbacks.

	\ifarxiv

	In \ouralgo{}, the optimization problem in Eq. (\ref{eq:optonline}) is decomposed into three parts: 
	\begin{align}
		\pi(\cdot) = \argmin_{\Pi} \mathbb{E} \left[ C^{sr}(b^{sr}, \pi) \right.
		+ \left. C^{br}(b^{br}, \pi) 
		+ C^{lr}(b^{lr}, \pi)
		\right]
		\label{eq:abstract}
	\end{align}
	(i) cost learned by the short-range planner $C^{sr}$, (ii) cost learned by the long-range planner $C^{lr}$, and (iii) cost learned by the bridge planner that connects the short-range policy to the long-range policy $C^{br}$.
%
	More concretely, Eq. (\ref{eq:abstract}) can be rewritten as follows for the instance of \ouralgo{} based on POMCP and FIRM.
	\begin{align}
		\pi(\cdot) =& \argmin_\Pi \mathbb{E}\! \left[ \sum_{k=0}^{K^{sr}\!-1}\!\! c(b_k, \pi_k(b_k)) \right.
			\nonumber
			\\
			& \left. + \!\!\sum_{k=K^{sr}}^{K^{sr}\!+K^{br}\!-1} \!\!c(b_k, \pi_{(K^{sr}\!-1)}(b_k))
		 	\!+ \!\tilde{J}^g(B^{j^\smallplus}\!) \right]
		\label{eq:firmcp}
	\end{align}
	where $K^{sr}$ is the fixed horizon of POMCP, and $K^{br}$ is a varying horizon of the bridge planner 
	such that the belief after bridging, $b_{(K^{sr}+K^{br})}$, reaches a belief node, $B^{j^\smallplus}\!$, on FIRM's global policy.
	$\tilde{J}^g(B^{j^\smallplus})$ denotes the approximate estimate of the cost-to-go of $B^{j^\smallplus}\!$ computed offline.

	\else  

	In \ouralgo{}, the optimization in Eq. (\ref{eq:optonline}) is decomposed as: 
	\begin{align}
		\!\!\!\!\pi(\cdot) \!=\! \argmin_{\Pi} \mathbb{E} \left[ C^{sr}(b^{sr}, \pi) 
		\!+\! C^{br}(b^{br}, \pi) 
		\!+\! C^{lr}(b^{lr}, \pi)
		\right]
		\label{eq:abstract}
	\end{align}
	First term, $C^{sr}$, is the cost learned by the short-range planner. $C^{lr}$ is the cost computed by the long-range planner. $C^{br}$ is the cost learned by the bridge planner that connects the short-range policy to the long-range policy.

	More concretely, Eq. (\ref{eq:abstract}) can be rewritten as follows for the instance of \ouralgo{} based on POMCP and FIRM.
	\begin{align}
		\pi(\cdot) =& \argmin_\Pi \mathbb{E}\! \left[ \sum_{k=0}^{K^{sr}\!-1}\!\! c(b_k, \pi_k(b_k)) \right.
			\nonumber
			\\
			& \left. + \!\!\sum_{k=K^{sr}}^{K^{sr}\!+K^{br}\!-1} \!\!c(b_k, \pi_{(K^{sr}\!-1)}(b_k))
		 	\!+ \!\tilde{J}^g(B^{j}\!) \right]
		\label{eq:firmcp}
	\end{align}
	where $K^{sr}$ is the fixed horizon of the short-range planner, 
	and $K^{br}$ is a varying horizon of a bridge planner 
	that takes the belief $b_{(K^{sr}+K^{br})}$ (at the end of bridging) to a node $B^{j}\!$ of the global long-range policy.
	$\tilde{J}^g(B^{j^\smallplus})$ denotes the approximate estimate of the cost-to-go of $B^{j}\!$ computed offline.

	\fi  

	In the following sections, we will discuss this decomposition in more detail using concrete instantiations of the short-range and long-range planners.

	\subsection{Long-range Global Planner}
    For our long-range global policy, we utilize FIRM (Feedback-based Information Roadmap) \cite{Ali14-IJRR}.
	FIRM is an offline, approximate long-range planner. FIRM locally approximates the system model with linear Gaussian models and generates a graph (see Fig.~\ref{fig:overview}-top) of Gaussian distributions in the belief space. 
	We formally describe the offline planning here (Algorithm~\ref{alg:firm}):
	Let us define the $i$-th FIRM node $B^i$ as a set of belief states near a center belief $b_c^i \equiv (v^i, P_c^i)$, where $v^i$ is a sampled point in state space and $P_c^i$ is the node covariance.
	\begin{align}
		B^i &= \{ b : || b - b_c^i || \leq \epsilon \}
			\label{eq:isReached}
	\end{align}
	$\epsilon$ is the node size and
	$\mathbb{V}^g = \{ B^i \}$ is the set of all FIRM nodes.
	
	For a pair of neighboring nodes $B^i$ and $B^j$, 
	a local closed-loop controller $\mu^{ij}: \mathbb{B} \rightarrow \mathbb{U}$ can be designed (e.g., Linear Quadratic Gaussian controllers) that can steer the belief from $B^i$ to $B^j$.
	We denote the set of all local controllers as $\mathbb{M}^g = \{ \mu^{ij} \}$ and the set of all local controllers originated from $B^i$ as $\mathbb{M}(i) \subset \mathbb{M}^g$. After graph construction, FIRM associates a cost function to each edge by simulating the local controller, $\mu^{ij}$ from $B^i$ to $B^j$.
	\begin{align}
		\tilde{C}^g(B^i, \mu^{ij}) = \sum_{k=0}^{\mathcal{K}^{ij}} c(b_{k}, \mu^{ij}(b_{k}))
		\label{eq:nominalCost}
	\end{align}
	where $b_0 = b_c^i$.
	$\mathcal{K}^{ij}$ is the number of time steps it takes for controller $\mu^{ij}$ to take belief $b_k$ from $B^i$ to $B^j$.


	A policy over FIRM graph is a mapping from nodes to edges, i.e., $\tilde{\pi}^g: \mathbb{V}^g \rightarrow \mathbb{M}^g$. Approximate cost-to-go for a given $\tilde{\pi}^g$ can be computed as follows.
	\begin{align}
		\tilde{J}^g(B^i; \tilde{\pi}^g) 
		& = \mathbb{E} \left[ \sum_{k=0}^\infty \tilde{C}^g (B_k, \tilde{\pi}^g(B_k)) \right]
		\label{eq:approxJ}
	\end{align}
	where $B_0 = B^i$. We denote by $\mathbb{N}(B^i)$ the set of neighbor FIRM nodes of $B^i$.
	Equation (\ref{eq:approxJ}) can also be rewritten in a recursive form as follows.
	\begin{align}
		& \tilde{J}^g(B^i; \tilde{\pi}^g) =\; \tilde{C}^g(B^i, \tilde{\pi}^g(B^i))  \nonumber \\
		& \quad\quad\;\;\;\, + \! \sum_{B^j \in \mathbb{N}(B^i)} \mathbf{P}^g(B^j | B^i, \tilde{\pi}^g(B^i)) \tilde{J}^g(B^j; \tilde{\pi}^g)
		\label{eq:bellmanfirm}
	\end{align}
	where $\mathbf{P}^g(B^j | B^i, \tilde{\pi}^g(B^i))$ is the transition probability from $B^i$ to $B^j$ under $\mu^{ij}=\tilde{\pi}^g(B^i)$.
	Note that since the transition probability is usually expensive to compute, approximation methods, such as Monte Carlo simulation, are being used.
	
	Then the following optimization problem is solved by value iteration to find a global policy for the sampled subspace.
	\begin{align}
		\tilde{\pi}^{g^*}(\cdot) &= \argmin_{\tilde{\Pi}^g} \tilde{J}^g(B_k; \tilde{\pi}^g)
	\end{align}
	\vspace{-20pt}

\begin{algorithm}[t]
	{\fontsize{8.5pt}{9.8pt}\selectfont
	\caption{\textsc{OfflinePlanning}()}
		\label{alg:firm}
		\begin{algorithmic}[1]

			\Input
				\ {
					\par \noindent $\mathbb{X}_{free}$: free space map
					\par \noindent $B^{goal}$: goal belief node
				}

			\Output
				\ {
					\par \noindent $(\mathbb{V}^g, \mathbb{M}^g)$: FIRM graph
					\par \noindent $\tilde{J}^g(B^j)$: cost-to-go for all  $B^j \in \mathbb{V}^g$
				}

			\Procedure{OfflinePlanning$()$}{}

				\State $\mathbb{V}^g \gets \{B^{goal}\}, ~ \mathbb{M}^g \gets \emptyset $
				
				\State Sample PRM nodes $\mathcal{V} = \{v^j\}$ s.t. $v^j \in \mathbb{X}_{free}$
				using \cite{Kavraki96}
				\State {Construct PRM edges $\mathcal{E} = \{e^{ij}\}$, where $e^{ij}$ is an edge from $v^i$ \unskip\parfillskip 0pt \par}
			 		\par \noindent \quad\,\,	to $v^j \in \mathbb{N}(v^i)$
				\ForAll{$v^i \in \mathcal{V}$}
					\State Design a controller and construct a FIRM node $B^i$ using \cite{Ali14-IJRR}
					\State $\mathbb{V}^g \gets \mathbb{V}^g\cup\{B^j\}$
				\EndFor
				\ForAll{$e^{ij} \in \mathcal{E}$}
					\State Design the controller $\mu^{ij}$ along $e^{ij}$ using \cite{Ali14-IJRR}
					\State $\mathbb{M}^g \gets \mathbb{M}^g\cup\{\mu^{ij}\} $
					\State {Compute transition cost $\tilde{C}^g(B^i, \mu^{ij})$ and transition proba- \unskip\parfillskip 0pt \par}
						\par \noindent \quad\,\quad\;\; bility $\mathbf{P}^g(B^j | B^i, \mu^{ij})$
				\EndFor
				\State $(\tilde{J}^g, \tilde{\pi}^g) \gets \textsc{ValueIteration} (\mathbb{V}^g,\mathbb{M}^g,\tilde{C}^g,\mathbf{P}^g)$

				\State \Return $(\tilde{J}^g(B^j), \tilde{\pi}^g(B^j))$ for all $B^j \in \mathbb{V}^g$

			\EndProcedure
		\end{algorithmic}
	}
	\end{algorithm}

\orgg{
	\subsection{Short-range Local Planner}
	\label{ssec:pomcp}

	To find a locally near-optimal policy in an online manner, we adopt POMCP (Partially Observable Monte Carlo Planning) \cite{silver2010monte} for this \ouralgo{} instance.
	POMCP is an online POMDP solver that uses Monte Carlo Tree Search (MCTS) in belief space and particle representation for belief states.

	POMCP's action selection during Monte Carlo simulation is governed by two policies: a tree policy within the constructed belief tree, and a rollout policy out of the tree but up to the finite discount horizon.
	
	The tree policy selects an action based on PO-UCT algorithm as follows.
	\begin{align}
    u^* &= \argmin_{u \in \mathbb{U}} \left(Q(b, u) - \eta_q \sqrt{\frac{\log(N(b))}{N(b, u)}}\right)
			\label{eq:Qominus}
	\end{align}
	where $Q(b, u)$ is as defined in Section~\ref{ssec:pomdp}, and $N(b)$ and $N(b, u)$ are the visitation counts for a belief node and an intermediate belief-action node, respectively.
	$\eta_q$ is a constant for exploration bonus in the tree policy.
	As $\eta_q$ gets larger, PO-UCT becomes more explorative in action selection.
	
	The rollout policy may be a random policy. If there is domain knowledge available, 
	a preferred action set can be specified for the rollout policy to guide the Monte Carlo simulation toward a promising subspace.

	
	After each Monte Carlo simulation, $Q(b, u)$ is updated by the following rule.
	\begin{align}
		Q'(b, u) = Q(b, u) + \frac{R - Q(b, u)}{N(b, u)}
	\end{align}
	where $Q'(b, u)$ denotes the updated $Q(b, u)$ value.
	This allows POMCP to learn the Q-values gradually from consecutive simulations.

	\begin{algorithm}[t]
	{\fontsize{8.5pt}{9.8pt}\selectfont
		\caption{\textsc{OnlinePlanningAndExecution}()}
		\label{alg:firmcp}
		\begin{algorithmic}[1]

			\Input
				\ {
					\par \noindent $\mathcal{b}_0$: an initial belief state
					\par \noindent $B^{goal}$: a goal FIRM node
				}


			\Procedure{OnlinePlanningAndExecution$(\mathcal{b}_0, B^{goal})$}{}

				\State $\mathcal{b} \gets \mathcal{b}_0$

				\While{$\mathcal{b} \notin B^{goal}$}
					\State $u^* \gets  \textsc{Search}(\mathcal{b})$
					\State $z' \gets \textsc{ExecuteAndObserve}(u^*)$
					\State $(\mathcal{b}', c) \gets \textsc{EvolveBeliefState}(\mathcal{b}, u^*, z')$
					\State $\mathcal{b} \gets \mathcal{b}'$
				\EndWhile


			\EndProcedure
		\end{algorithmic}
	}
	\end{algorithm}

	\begin{algorithm}[t]
	{\fontsize{8.5pt}{9.8pt}\selectfont
		\caption{\textsc{Search}()}\label{alg:search}

		\begin{algorithmic}[1]
		
				
			\Procedure{Search$(\mathcal{b})$}{}



				\For{$i = 1, 2, ..., N_{p}$}
					\Comment {$N_{p}$: the number of particles}
					\State $x \sim \mathcal{b}$
						\Comment{draw a sample from belief $\mathcal{b}$}
					\State $\textsc{Simulate}(x, \mathcal{b}, 0, \texttt{nil})$
				\EndFor

				\State $b \gets \textsc{GetMatchingBeliefNode}(\mathbb{T}, \mathcal{b})$
					\Statex\Comment{$\mathbb{T}$: the current POMCP tree}
				\State $\mu^* \gets \argmin_{\mu^{\cdot j} \in \mathbb{M}(b)} Q(b, \mu^{\cdot j}(b))${\color{red} ??mb?}
				\State $u^* \gets \mu^*(b)$
				\State \Return $u^*$

			\EndProcedure
		\end{algorithmic}
	}
	\end{algorithm}

	{\color{red} two parts: pomcp and fast-pomcp}
	\rev{
	In the original POMCP, $J(b)$ in Eq. (\ref{eq:minq}) is not backed up in favor of \textit{unbiasedness}.
	}
	{\color{red} ???}
	This is reasonable in dynamic or adversarial environments, such as Go, which are the target scenarios of the original POMCP \cite{silver2010monte,gelly2011monte}.
	However, it should be noted that the lack of $J(b)$ backup leads to higher variance and slower convergence of $Q(b, u)$. 
	{\color{red} ???}
	
	This can cause a significant problem in RAL-POMDPs. {\color{red} the example/paragraph is not very clear.}
	Firstly, an exceptional{\color{red} ???} simulation episode can easily ruin {\color{red} informal} the learned Q-values in a RAL-POMDP with a longer horizon without discounting {\color{red} ???}.
	For example, consider a scenario where there is a safety risk near to the goal but far from the current state, and the optimal solution path is to dodge this risk once the robot comes closer to it.
	Without discounting, this delayed risk will percolate to all the values along its forward simulation path, 
	and then POMCP without $J(b)$ backup will end up spending a lot of time to explore many other paths branched off near the current state.
	Secondly, the robot simulation and safety evaluation are orders of magnitude more expensive than the game evaluation.
	In the case the environment is mostly static, it is typically more efficient to pursue optimistic action selection rather than unbiased one.
	Thus, it is highly discouraged for complex physical systems to perform many simulations with a slow convergence rate as in the original POMCP.
		
	To remedy this, \ouralgo{} backs up $J(b)$ as described in Eq. (\ref{eq:minq}) in addition to $N(b)$, $N(b, u)$, and $Q(b, u)$.
	As can be seen in line \ref{alg:updateJ} of Algorithm~\ref{alg:simulate}, $J(b)$ is updated at every iteration when $Q(b, u)$ for $\forall u \in \mathbb{U}$ is updated. {\color{red} needs to be highlighted}
	For the more details on online planning, see Algorithm~\ref{alg:firmcp}--\ref{alg:rollout}.
}  

	\subsection{Short-range Local Planner}
	\label{ssec:pomcp}

  The short-range local planner is to find a locally near-optimal policy in an online manner.
  To tackle RAL-POMDP problems, we develop a variant of POMCP (Partially Observable Monte Carlo Planning) \cite{silver2010monte} here, referred to as J-POMCP for the current instance of \ouralgo{}.

  We start by a brief review of the original POMCP algorithm.
	POMCP is an online POMDP solver that uses Monte Carlo Tree Search (MCTS) in belief space and particle representation of belief states.
	POMCP's action selection during Monte Carlo simulation is governed by two policies: a tree policy within the constructed belief tree, and a rollout policy beyond the tree and up to a pre-defined finite discount horizon.
	
  The tree policy selects an action based on Partially Observable UCT (PO-UCT) algorithm as follows.
	\begin{align}
    u^* &= \argmin_{u \in \mathbb{U}} \left(Q(b, u) - \eta_q \sqrt{\frac{\log(N(b))}{N(b, u)}}\right)
			\label{eq:Qominus}
	\end{align}
	where $Q(b, u)$ is as defined in Section~\ref{ssec:pomdp}, and $N(b)$ and $N(b, u)$ are the visitation counts for a belief node and an intermediate belief-action node, respectively.
	$\eta_q$ is a constant for exploration bonus in the tree policy.
	As $\eta_q$ gets larger, PO-UCT becomes more explorative in action selection.
	
	The rollout policy may be a random policy. If there is domain knowledge available, 
	a preferred action set can be specified for the rollout policy to guide the Monte Carlo simulation toward a promising subspace.

	
	After each Monte Carlo simulation, $Q(b_k, u_k)$ (corresponding to the belief-action pair on the $k$-th simulation step in tree) is updated as follows.
	\begin{align}
    Q'(b_k, u_k) &= Q(b_k, u_k) + \frac{R_k - Q(b_k, u_k)}{N(b_k, u_k)}
	\end{align}
  where
	\begin{align}
    R_k &= \sum_{k'=k}^{K^{sr}-1} \gamma^{(k'-k)} c(b_{k'}, u_{k'}) + \gamma^{(K^{sr}-k)} J(b_{K^{sr}})
    \label{eq:return}
	\end{align}
	$Q'(b_k, u_k)$ denotes the updated value of $Q(b_k, u_k)$,
  and $R_k$ is the accumulated return from the horizon of the short-range planner to the current simulation step $k$. 
  Note that $J(b_{K^{sr}})$ should appear in Eq. (\ref{eq:return}) if the $K^{sr}$ is shorter than the problem's planning horizon. 
	Through iterative forward simulations, POMCP gradually learns the Q-value for each belief-action pair.

  There are two major challenges for POMCP when applied to RAL-POMDP problems.
  \begin{enumerate}
    \item \label{enum1} RAL-POMDPs are infinite horizon problems without cost discounting.
      Thus, POMCP needs an estimate of $J(b)$ for each $b$ on its finite horizon, possibly from naive heuristics or sophisticated global policy solvers.
    \item \label{enum2} RAL-POMDPs incorporates computationally expensive costs and constraints, such as collision checking by high-fidelity simulator, and thus, a higher number of forward simulations are discouraged.
      However, POMCP usually requires many simulations until convergence because its update rule in Eq. (\ref{eq:return}) does not bootstrap all the successors whose values are initialized by domain knowledge or global policy solvers.
  \end{enumerate}

  The first challenge is addressed by the bridge planner that connects the short-range planner to the long-range global planner (see Section \ref{sec:bridge}). To handle the second challenge, we develop a variant of POMCP, referred to as J-POMCP.

	\begin{algorithm}[t]
	{\fontsize{8.5pt}{9.8pt}\selectfont
		\caption{\textsc{OnlinePlanningAndExecution}()}
		\label{alg:firmcp}
		\begin{algorithmic}[1]

			\Input
				\ {
					\par \noindent $\mathcal{b}_0$: an initial belief state
					\par \noindent $B^{goal}$: a goal FIRM node
				}


			\Procedure{OnlinePlanningAndExecution$(\mathcal{b}_0, B^{goal})$}{}

				\State $\mathcal{b} \gets \mathcal{b}_0$

				\While{$\mathcal{b} \notin B^{goal}$}
					\State $u^* \gets  \textsc{Search}(\mathcal{b})$
					\State $z' \gets \textsc{ExecuteAndObserve}(u^*)$
					\State $(\mathcal{b}', c) \gets \textsc{EvolveBeliefState}(\mathcal{b}, u^*, z')$
					\State $\mathcal{b} \gets \mathcal{b}'$
				\EndWhile


			\EndProcedure
		\end{algorithmic}
	}
	\end{algorithm}

	\begin{algorithm}[t]
	{\fontsize{8.5pt}{9.8pt}\selectfont
		\caption{\textsc{Search}()}\label{alg:search}

		\begin{algorithmic}[1]
		
				
			\Procedure{Search$(\mathcal{b})$}{}



				\For{$i = 1, 2, ..., N_{p}$}
					\Comment {$N_{p}$: the number of particles}
					\State $x \sim \mathcal{b}$
						\Comment{draw a sample from belief $\mathcal{b}$}
					\State $\textsc{Simulate}(x, \mathcal{b}, 0, \texttt{nil})$
				\EndFor

				\State $b \gets \textsc{GetMatchingBeliefNode}(\mathbb{T}, \mathcal{b})$
					\Statex\Comment{$\mathbb{T}$: the current POMCP tree}
				\State $\mu^* \gets \argmin_{\mu^{\cdot j} \in \mathbb{M}(b)} Q(b, \mu^{\cdot j}(b))$
          \revv{\Statex\Comment{$\mathbb{M}(b)$: the set of local controllers applicable to $b$}}
				\State $u^* \gets \mu^*(b)$
				\State \Return $u^*$

			\EndProcedure
		\end{algorithmic}
	}
	\end{algorithm}


  J-POMCP follows exactly how POMCP selects actions to explore the belief space, but slightly differs in how the values are updated.
  More precisely, J-POMCP uses the following instead of Eq. (\ref{eq:return}) to compute $Q'(b_k, u_k)$.
	\begin{align}
    R_k &= c(b_k, u_k) +  \min_{u \in \mathbb{U}} Q(b_{k+1}, u)
	\end{align}
  Note that the second term on the right-hand side is $J(b_{k+1})$ in Eq. (\ref{eq:minq}), hence the name J-POMCP.
  This is in fact how Q-learning updates the Q-value in a greedy manner by bootstrapping the initialized or learned values \cite{watkins1992q}.
  It creates a bias in value and converges faster if the initial values are informative.

  In \ouralgo{}, we have access to the approximate global policy computed by the long-range solver, which is much better than the heuristics computed under the assumption of deterministic or fully observable environments.
  J-POMCP can make the most use of the underlying global policy through bootstrapping.

  This new update rule can be implemented as presented in Algorithm~\ref{alg:simulate} (see Line~\ref{alg:updateR}, \ref{alg:updateJ}, and \ref{alg:returnJ}).
	As in Line \ref{alg:updateJ}, $J(b)$ is updated every time $Q(b, u)$ is updated for $\forall u \in \mathbb{U}$. 

	Algorithms~\ref{alg:firmcp}--\ref{alg:rollout} detail this online planning process.

	\begin{algorithm}[t]
	{\fontsize{8.5pt}{9.8pt}\selectfont
		\caption{\textsc{Simulate()}}\label{alg:simulate}

		\begin{algorithmic}[1]


			\Procedure{Simulate$(x, \mathcal{b}, k, \mu^\smallminus)$}{}

				\If{$k > K^{sr}$}
					\Comment{$K^{sr}$: short-range planner's fixed horizon}
					\State \Return $\textsc{Rollout}(x, \mathcal{b}, k, \mu^\smallminus)$
				\EndIf

				\State $b \gets \textsc{GetMatchingBeliefNode}(\mathbb{T}, \mathcal{b})$

				\If {$b$ is \texttt{nil}}

					\State $\mathbb{T} \gets \textsc{AddNewBeliefNodeToTree}(\mathbb{T},\mathcal{b})$
					\State $b \gets \mathcal{b}$
					\State $\mathbb{M}(b) \gets \{\}$
						\Comment{$\mathbb{M}(b)$: a set of local controllers for $b \in \mathbb{T}$}
					\State $\mathbb{N}(b) \gets \textsc{GetNearestNeighbors}(b)$
					\ForAll {$j$ s.t. $B^j \in \mathbb{N}(b)$}
						\State $\mu^{\cdot j} \gets \textsc{GetLocalController}(b, B^{j^*})$
						\State $\mathbb{M}(b) \gets \mathbb{M}(b) \cup \{\mu^{\cdot j}\}$
						\State $\mathscr{C}^j \gets \textsc{HeuristicEdgeCost}(b, B^j)$
						\State $Q(b, \mu^{\cdot j}(b)) \gets \mathscr{C}^j + \tilde{J}^g(B^j)$
							\label{alg:initQ}
						\State $N(b, \mu^{\cdot j}(b)) \gets 0$
							\Statex \Comment{$N(b, u)$: visitation count of  $b \in \mathbb{T}$ and $u \in \mathbb{U}$}
					\EndFor
					\State $J(b) \gets \min_{\mu^{\cdot j} \in \mathbb{M}(b)} Q(b, \mu^{\cdot j}(b))$
							\label{alg:initJ}
					\State $N(b) \gets 0$
						\Comment{$N(b)$: visitation count of  $b \in \mathbb{T}$}

					\State \Return $\textsc{Rollout}(x, \mathcal{b}, k, \mu^\smallminus)$

				\Else

					\State $b \gets \textsc{UpdateBeliefNode}(b, \mathcal{b})$



					\State $\mu^* \gets \argmin_{\mu^{\cdot j} \in \mathbb{M}(b)} Q(b, \mu^{\cdot j}(b)) - \eta_q \sqrt{\frac{\log N(b)}{N(b, \; \mu^{\cdot j}(b))}}$
						\Statex \Comment{$\eta_q$: tree exploration parameter}

					\State $u^* \gets \mu^*(b)$

					\State $(x', z') \gets \textsc{GenerativeModel}(x, u^*)$
					\State $(\mathcal{b}', c) \gets \textsc{EvolveBeliefState}(\mathcal{b}, u^*, z')$

					\State $R \gets c' + \textsc{Simulate}(x', \mathcal{b}', k\!+\!1, \mu, j^\smallplus)$
					    \label{alg:updateR}
          \label{alg:return}

				\EndIf

				\State $N(b, u^*) \gets N(b, u^*) + 1$
				\State $N(b) \gets N(b) + 1$

				\State $Q(b, u^*) \gets Q(b, u^*) + \frac{R - Q(b, \; u^*)}{N(b, \; u^*)}$

				\State $J(b) \gets \min_{\mu^{\cdot j} \in \mathbb{M}(b)} Q(b, \mu^{\cdot j}(b))$
				    \label{alg:updateJ}


				\State \Return $J(b)$
				    \label{alg:returnJ}

			\EndProcedure
		\end{algorithmic}
	}
	\end{algorithm}

\vspace{-10pt}

	\begin{algorithm}[t]
	{\fontsize{8.5pt}{9.8pt}\selectfont
		\caption{\textsc{Rollout()}}\label{alg:rollout}

		\begin{algorithmic}[1]


			\Procedure{Rollout$(x, \mathcal{b}, k, \mu^\smallminus)$}{}

				\If{$k > K^{sr}$}


					\State $B^{j^\smallminus} \gets \textsc{GetTargetFIRMNode}(\mu^\smallminus)$
					\If{$\mathcal{b} \in B^{j^\smallminus}$}
						\State \Return $\tilde{J}^g(B^{j^\smallminus})$

					\Else

						\State $\mu^* \gets \mu^\smallminus$
						\State $u^* \gets \mu^*(b)$

					\EndIf

				\Else

					\State $\mathbb{N}(\mathcal{b}) \gets \textsc{GetNearestNeighbors}(\mathcal{b})$

					\ForAll {$j$ s.t. $B^j \in \mathbb{N}({\mathcal{b}})$} 
						\State $\mathscr{C}^j \gets \textsc{HeuristicEdgeCost}(\mathcal{b}, B^j)$
						\State $w^{j} \gets \frac{1}{\mathscr{C}^j + \tilde{J}^g(B^j)} + \eta_w$
							\label{alg:rollout-Jreg}
							\Statex \Comment{$\eta_w$: rollout exploration parameter}
					\EndFor
					\State $j^* \sim \frac{w^j}{\sum_{j'} w^{j'}}$

					\State $\mu^* \gets \textsc{GetLocalController}(\mathcal{b}, B^{j^*})$
					\State $u^* \gets \mu^*(\mathcal{b})$
				\EndIf

				\State $(x', z') \gets \textsc{GenerativeModel}(x, u^*)$
				\State $(\mathcal{b}', c) \gets \textsc{EvolveBeliefState}(\mathcal{b}, u^*, z')$

				\State \Return $c + \textsc{Rollout}(x', \mathcal{b}', k\!+\!1, \mu^*)$

			\EndProcedure

		\end{algorithmic}
	}
	\end{algorithm}

	\subsection{Bridging the Local and Global Policies}\label{sec:bridge}



	The major shortcomings of the traditional short range planners (e.g., POMCP) in RAL-POMDP problems are due to the lack of proper guidance beyond its horizon.
	These methods may fall into the local minima and lead to a highly risky or suboptimal solution.
	In contrast, \ouralgo{} bootstraps the graph-based global policy to guide the forward search during online planning and improve the safety guarantees and optimality.

	There are two places where the cost-to-go information from graph-based global policy (e.g., FIRM) is being used.
	1) When a new belief node $b$ is added to the \ouralgo{}'s forward search tree, it is initialized using the cost-to-go from the underlying global graph (line~\ref{alg:initQ} and \ref{alg:initJ} in Algorithm~\ref{alg:simulate}).
	\begin{align}
		Q_{init}(b, u^j) &= \mathscr{C}(b, B^j) + \tilde{J}^g(B^j) \\
		J_{init}(b) &= \min_{u^j \in \mathbb{U}(b)} Q_{init}(b, u^j)
	\end{align}
	where $\mathbb{U}(b) = \{ \mu^{\cdot j}(b) \, | \, \mu^{\cdot j} \! \in \! \mathbb{M}(b)\}$
	and $\mathbb{M}(b)$ is a set of local controllers that steers the belief from $b$ to its neighboring FIRM nodes $B^j \in \mathbb{N}(b)$.
	$\mathscr{C}(b, B^j)$ is the estimated edge cost from $b$ to $B^j$, and $\tilde{J}^g(B^j)$ is the cost-to-go computed by FIRM in the offline phase.
	The visitation counts are initialized to zeros, i.e., $N_{init}(b, u^j) = 0$ and $N_{init}(b) = 0$.
	Note that the action space $\mathbb{U}(b)$ is only a subset of the continuous action space which is based on local controllers toward neighboring FIRM nodes.
		%
	%
	
	2) The rollout policy selects an action by random sampling from a probability mass function which is based on FIRM's cost-to-go rather than a uniform distribution (Algorithm~\ref{alg:rollout}).
	For each $B^j \in \mathbb{N}(\mathcal{b})$, the weight $w^j$ is computed as
	\begin{align}
		w^j &= (\mathscr{C}(\mathcal{b}, B^j) + \tilde{J}^g(B^j))^{-1} + \eta_w
	\end{align}
	where $\mathcal{b}$ denotes the sampled belief state in the current Monte Carlo simulation. $\eta_w$ is a constant for the exploration bonus in the rollout policy. As $\eta_w$ gets larger, the rollout policy becomes more explorative. $\eta_w = \infty$ will lead the rollout to pure exploration, i.e., random sampling from uniform distribution.
	
	Based on the computed weight $w^j$, the rollout policy selects an action by random sampling $u^* \sim p(u^j; \mathcal{b})$ from the following probability mass function.
	\begin{align}
		p(u^j; \mathcal{b}) &= {w^j}{(\textstyle \sum_{j'} w^{j'})^{-1}}~\text{ for } \forall u^j \in \mathbb{U}(\mathcal{b})
	\end{align}

\vspace{-8pt}

\rev{
	\subsection{Discussion}

	We briefly highlight a few properties of the proposed algorithm in terms of optimality and safety.

	\pr{Optimality}
	We first consider a small problem where the goal belief (with the known cost-to-go of 0) is within the finite horizon of POMCP from the beginning.
	Stand-alone POMCP is guaranteed to converge to the optimal solution \cite{silver2010monte}, and it is trivial to prove that \ouralgo{} converges to the optimal.
	In larger problems, the global optimality depends on the cost-to-go estimation for the leaf nodes of the POMCP tree.
	Note that finding the accurate cost-to-go estimation is as difficult as the original problem. 
	Simple heuristics such as Euclidean distance heuristic typically  provide poor cost-to-go estimation compared to the approximate long-range solvers, such as FIRM, that take uncertainty into account.
	Hence, in terms of optimality, \ouralgo{} mostly outperforms POMCP.
%
	It should also be noted that the cost-to-go estimation of FIRM gets closer to the optimal with more and more samples \cite{Ali14-IJRR},
	which can improve the overall optimality of \ouralgo{}.

	\pr{Safety}
	As discussed in Sec.~\ref{ssec:ral-pomdp}, minimizing the risk (the expected failure probability along the whole trajectory) is encoded as a soft constraint in the cost-to-go minimization problem.
	Based on the optimality analysis, \ouralgo{} can achieve smaller expected cost-to-go from the initial belief than POMCP in non-trivial problems, which effectively leads to policies with less risk than POMCP policies.
	Since \ouralgo{} adapts to the current belief during the online planning phase, it can provide higher safety than (offline) FIRM planner.

}  

\section{Experiments}  \label{sec:experiments}


\subsection{Rover Navigation Problem}
As a representative RAL-POMDP problem, we consider the real-world problem of the Mars rover navigation under motion and sensing uncertainty.
In Rover Navigation Problem (RNP) introduced here, the objective is to navigate a Mars rover from a starting point to a goal location while avoiding risk regions such as steep slopes, large rocks, etc.
The rover is provided a map of the environment which is created by a Mars orbiter satellite \cite{HiRiseMapLocalization} and a Mars helicopter \cite{MarsHeli} flying ahead of the rover.
This global map contains the location of landmarks, which serve as information sources that the rover can use to localize itself on the global map. 
The map also contains the location of risk regions that the rover needs to avoid and regions of science targets which needs to be visited by the rover to collect data or samples.


\ifarxivfin

\pr{Motion model}
The motion of the rover is noisy due to factors like wheel slippage, unknown terrain parameters, etc.
In RNP introduced here, we assume a nonlinear motion model (but still a holonomic one to provide a simple benchmark). Specifically, we use the model in \cite{Tamas04}, where the state $x = [\prescript{g}{}{x},\prescript{g}{}{y},\prescript{g}{}{\theta}]^T \in \mathbb{R}^3$ represents the 2D position and heading angle of the rover in the global world frame. Control input $u \in \mathbb{R}^3$ represents the velocity of each coordinate. Using \cite{Tamas04}, we obtain the discrete motion model as follows:
\begin{align}
    x_k = f(x_{k-1},u_{k-1},w_{k-1})    
\end{align}
where $w$ is motion noise drawn from a Gaussian distribution with zero mean.

\newcommand\inv[1]{#1\raisebox{1.15ex}{$\scriptscriptstyle-\!1$}}
\pr{Observation model}
In RNP, we assume the rover can measure the range and bearing to each landmark.
Denoting the displacement vector to a landmark $L^i$ by $d^i = [d^i_x, d^i_y]^T \equiv L^i - p$, where $p=[\prescript{g}{}x, \prescript{g}{}y]^T$ is the position of the robot, the observation model is given by:
\begin{align}
	z^i &= h^i(x,v^i) = \left[ ||d^i||,\; \inv{\tan\!\!}(d^i_y/d^i_x)\! -\! \prescript{g}{}\theta \right]^T \!+ v^i \\ 
    R^i &= \diag \left((\xi_r||d^i||+\sigma^r_b)^2,\; (\xi_\theta||d^i||+\sigma_b^\theta)^2 \right)
\end{align}
where $v^i \sim \mathcalorg{N}(0, R^i)$.
The measurement quality degrades as the distance of the robot from the landmark increases, and the weights $\xi_r$ and $\xi_\theta$ control this dependency. 
$\sigma^r_b$ and $\sigma^\theta_b$ are the bias standard deviations.

\else  

\rev{
\pr{Motion and observation model}
We consider the motion and observation model used in \cite{agha2018slap}.
The motion of the rover is noisy due to factors like wheel slippage, unknown terrain parameters, etc.
We assume a nonlinear motion model (but still a holonomic one to provide a simple benchmark) under Gaussian noise.
Specifically, a state $x = (\mathtt{x},\mathtt{y},\mathtt{\theta})^T \in \mathbb{R}^3$ is composed of 2D position and heading angle of the rover,
and a control input $u \in \mathbb{R}^3$ is velocity command for each coordinate.
As for the observation model, we assume the rover can measure the range and bearing to each landmark under Gaussian noise.
The measurement quality linearly degrades as the distance of the robot from the landmark increases.
}

\fi  

\pr{Cost and risk metrics}
In RNP, we consider the localization accuracy as well as the mission completion time as the main elements of the cost function.
\ifarxiv
Localization accuracy depends on the distribution of landmarks in the global map and their locations relative to the rover.
\fi
\rev{
Specifically, we consider a cost function under the Gaussian assumption as follows.
\begin{align}
c(b_k,u_k) &= \xi_p \textup{tr}(P_k)  + \xi_T \Delta t
	\label{eq:unitcost}
\end{align}
where $P_k=\textup{cov}(x_k|z_{0:k})$ represents the second moment of the belief distribution as a measure of state uncertainty, and $\Delta t$ is the time step size for each action.
$\xi_p$ and $\xi_T$ denote weights to combine these different objectives.
In the experiments, we used $\xi_p = 10$, $\xi_T = 1$, and $\Delta t = 0.005$.
%
	The risk in RNP denotes the expected probability of failure (collision with obstacles) along the whole trajectory under a policy as described in Section~\ref{ssec:ral-pomdp}.
	Note that the action cost in Eq. (\ref{eq:unitcost}) is not directly related to the risk metric.
}

\pr{RNP scalability}
To test algorithms under different RNP complexities, we parameterize the Rover Navigation Problem as $\rnp{}_{s}(e, o)$, where $e$ represents the size of the environment, $o$ represents the size/density of obstacles, and $s$ is the environment type.
We compare three key attributes (safety, scalability, and optimality) of each algorithm in three different environments ($\rnp{}_{\infotrap{}}$, $\rnp{}_{\obswall{}}$, and $\rnp{}_{\forest{}}$).
%


\subsection{Baseline Methods}

As baseline methods, we consider three algorithms. From the class of forward search methods, we consider the POMCP method \cite{silver2010monte}. From the approximate long-range methods, we consider the FIRM \cite{Ali14-IJRR} and its variant \cite{Ali14-RolloutFIRM-ICRA,agha2018slap} which is referred to as online graph-based rollout (OGR) here.

\pr{FIRM}
FIRM is an execution of closed-loop controls returned by its offline planning algorithm.
FIRM relies on belief-stabilizing local controllers at each graph node to ameliorate the curse of history.
Hence, it can solve larger problems, but it is usually suboptimal compared to optimal online planners.

\pr{OGR}
OGR is an online POMDP solver that improves the optimality of a base graph-based method (particularly, FIRM in our implementation). At every iteration, an OGR planner selects the next action by simulating all different possible actions and picking the best one. 
Compared to \ouralgo{}, OGR expands the belief tree in a full-width but only for one-step look-ahead.
While it can improve the performance of its base graph-based planner, it is prone to local minima due to the suboptimality in base planner's cost-to-go and OGR's myopic greedy policy. 
Additionally, OGR discards the performed forward simulation results in the next iteration, while \ouralgo{} leverages them at each step to enrich the underlying tree structure.

\pr{URM-POMCP} 
We extend POMCP to make it work in larger and continuous spaces such as RAL-POMDPs. We refer to it as URM-POMCP (Uniform RoadMap POMCP).
%
%
\rev{
URM-POMCP uses a heuristic cost-to-go function to cope with the finite horizon limitation in POMCP.
\ifarxiv
Each Monte Carlo simulation in POMCP should be episodic, which means that each forward simulation should reach either the discounted horizon or terminal states.
In RAL-POMDP problems with infinite horizon and a goal state, it is almost impossible to satisfy this condition unless it is close to the terminal states.
URM-POMCP uses a heuristic function to estimate a cost-to-go from the end of the tree to the goal state.
\fi
\orgg{
In this work, the heuristic cost-to-go function $\tilde{J}(b, x)$ is implemented as $\tilde{J}(b, x) \approx \frac{d(b, b_g)}{\Delta \dot{x}_{max}} (\xi_p\tr({P_c}) + \xi_T \Delta t)$, where $d(b, b_g)$ is the Euclidean distance from the current belief state to the goal belief state, $\Delta \dot{x}_{max}$ is the (approximate) maximum velocity of the rover, and $P_c$ is the stationary covariance of $m \in \mathbb{X}$ for the current belief $b=(m,P)$.
}  
}  
%



\subsection{Safety}

Reducing risk and ensuring system's safety is the most important goal of the proposed framework. 
We compare the risk aversion capability of BVL with the baseline methods on $\rnp{}_{\infotrap{}}(e,o)$ shown in Fig.~\ref{fig:infotrap}, where $e$ is the length of the environment and $o$ is the length of the obstacle.

In $\rnp{}_{\infotrap{}}$ problems the rover needs to reach the goal by passing through the narrow passage without colliding with any obstacles.
%
%
As shown in Fig.~\ref{fig:infotrap}, \ouralgo{} reduces risk of collision by executing a longer trajectory that goes close to the landmarks (yellow diamonds) and reduces the localization uncertainty before entering the narrow passage. 
Since the URM-POMCP algorithm plans in a shorter horizon and 
depends on a heuristic cost-to-go estimation beyond the horizon, it takes a greedy approach to go towards the goal thus taking a higher risk of colliding with the obstacles. 
This can also be seen in Fig.~\ref{fig:safety_plot} that shows the probability of collision of the rover as the length of the obstacle increases.
The probability of collision here was estimated by running 20 Monte Carlo simulations of rover executing policies by different planners.

\ifarxiv


\begin{figure}[t]
	\centering
	\begin{subfigure}{0.6\columnwidth}
		\centering
		\includegraphics[clip,trim=11cm 4.8cm 11cm 6.47cm, width=\textwidth]{figs/InfoTrap5-URM-POMCP-run13.png}
		\vspace{-0.6cm}
		\caption{URM-POMCP}
	\end{subfigure}
	\\
	\vspace{0.1cm}
	\begin{subfigure}{0.6\columnwidth}
		\centering
		\includegraphics[clip,trim=11cm 4.8cm 11cm 6.47cm, width=\textwidth]{figs/InfoTrap5-FIRMCP-run8.png}
		\vspace{-0.6cm}
		\caption{\ouralgo{}}
	\end{subfigure}
	\caption{Execution trajectories for $\rnp{}_{\infotrap{}}(10,3)$ problem where Mars rover navigates from start to goal through the narrow passage while avoiding obstacles. The rover can reduce its pose uncertainty by moving closer to the landmarks (orange diamonds) on top of the map.
		The \ouralgo{} trajectory approaches the landmarks first and then enters the narrow passage to reduce the chance of collision,
		but the URM-POMCP trajectory aggressively moves toward the goal without considering the risk beyond its finite horizon, which leads to a higher chance of collision.
	}
	\label{fig:infotrap}
\end{figure}

\else  

\begin{figure}[t]
	\centering
	\begin{subfigure}{0.493\columnwidth}
		\centering
		\includegraphics[clip,trim=0cm 0cm 0cm 0cm, width=\textwidth]{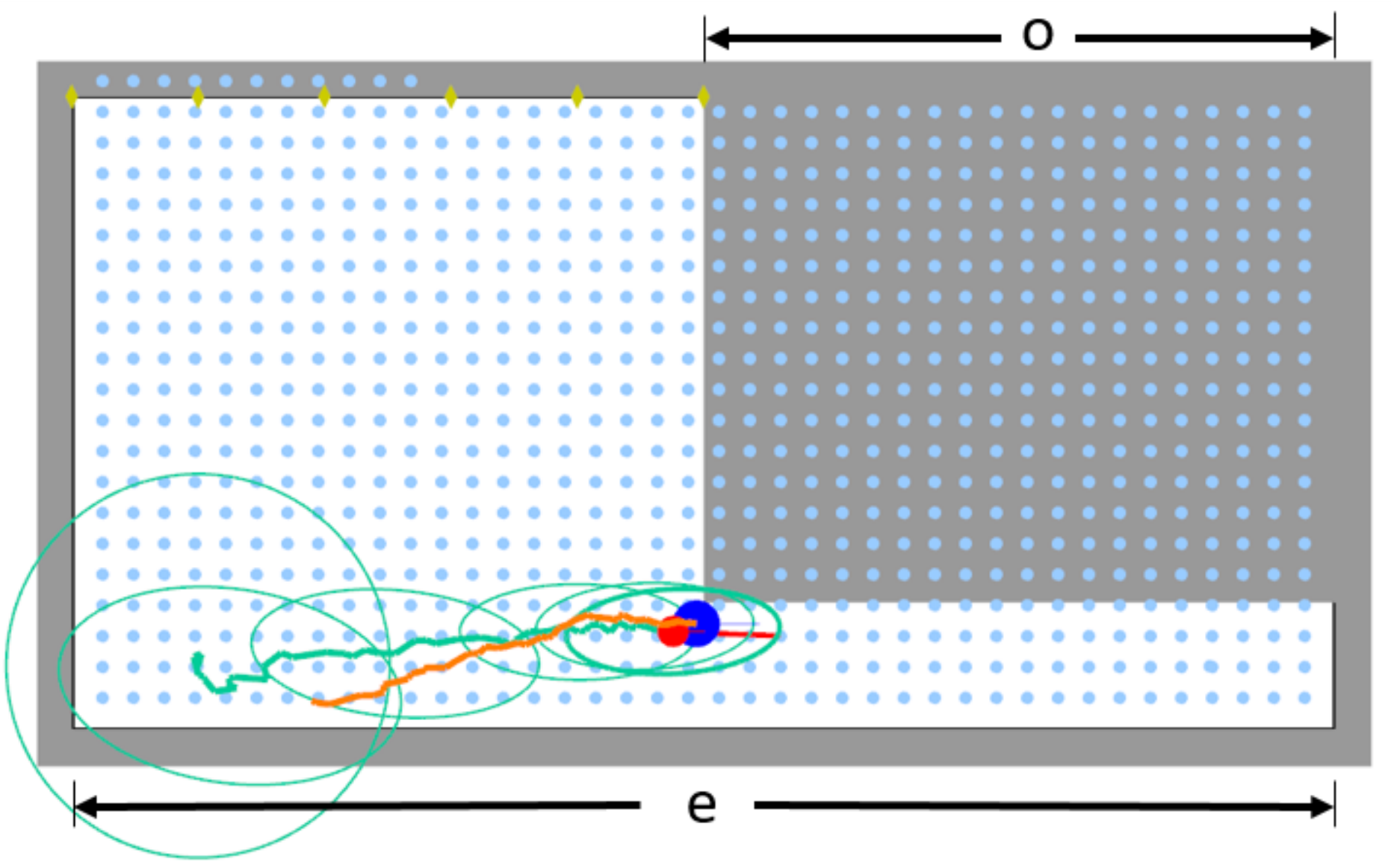}
		\vspace{-0.6cm}
		\caption{POMCP}
	\end{subfigure}
	\begin{subfigure}{0.493\columnwidth}
		\centering
		\includegraphics[clip,trim=0cm 0cm 0cm 0cm, width=\textwidth]{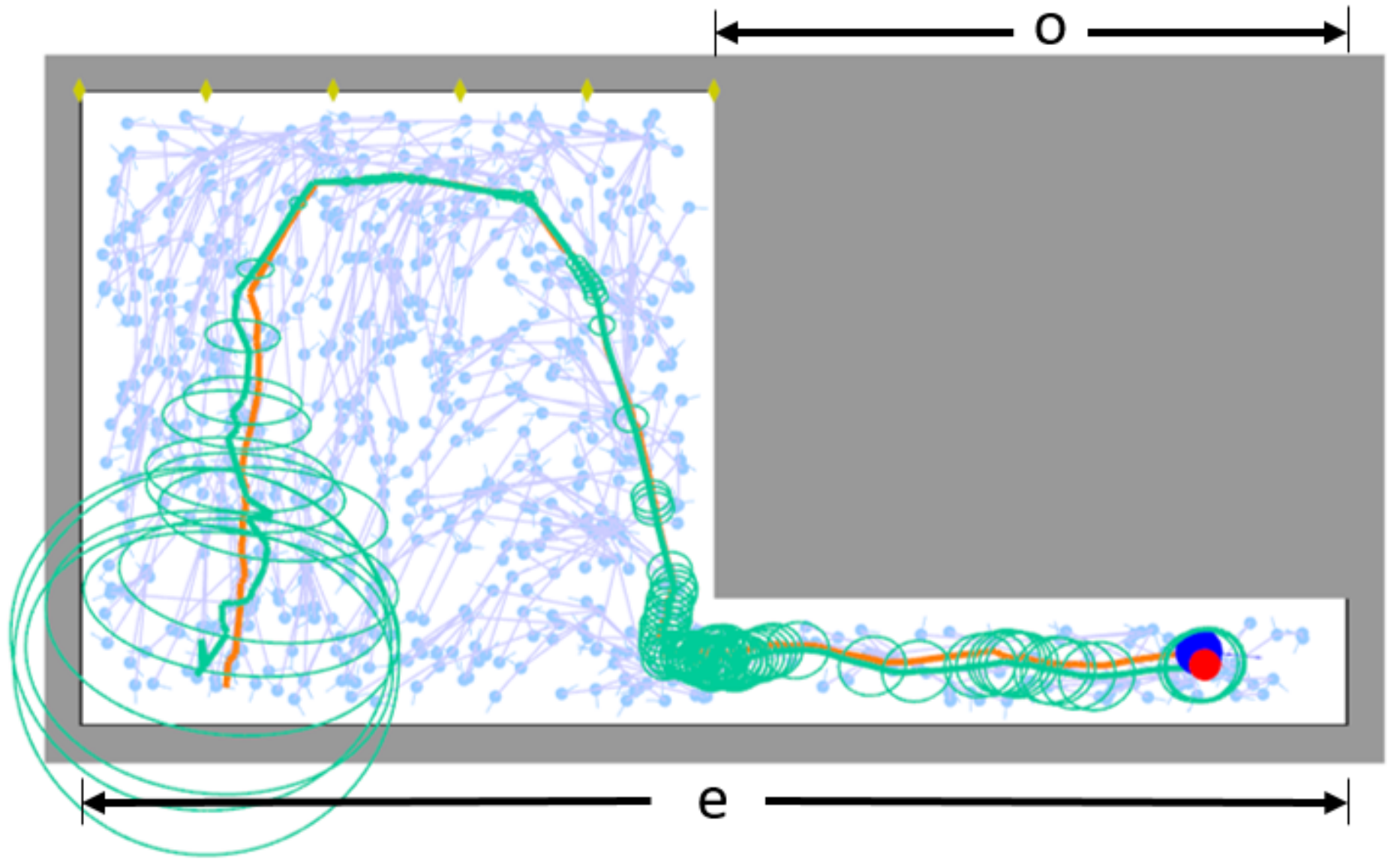}
		\vspace{-0.6cm}
		\caption{\ouralgo{}}
	\end{subfigure}
	\caption{
	\rev{Execution trajectories for $\rnp{}_{\infotrap{}}(10,3)$ problem where Mars rover navigates from start to goal through the narrow passage while avoiding obstacles. The rover can reduce its pose uncertainty by moving closer to the landmarks (yellow diamonds) on top of the map. 
		The \ouralgo{} trajectory approaches the landmarks first and then enters the narrow passage to reduce the chance of collision,
		but the URM-POMCP trajectory aggressively moves toward the goal without considering the risk beyond its finite horizon, which leads to a higher chance of collision. 
    \revv{
    The orange path represents the actual trajectory of the robot, and the green path and ellipses illustrate the mean and covariance of the belief, respectively.
    The underlying tree in light blue depicts the global policy computed by FIRM.
  }
	}  
	}
	\label{fig:infotrap}
\end{figure}

\fi  

\begin{figure}[t]
    \centering
    \includegraphics[height=4cm]{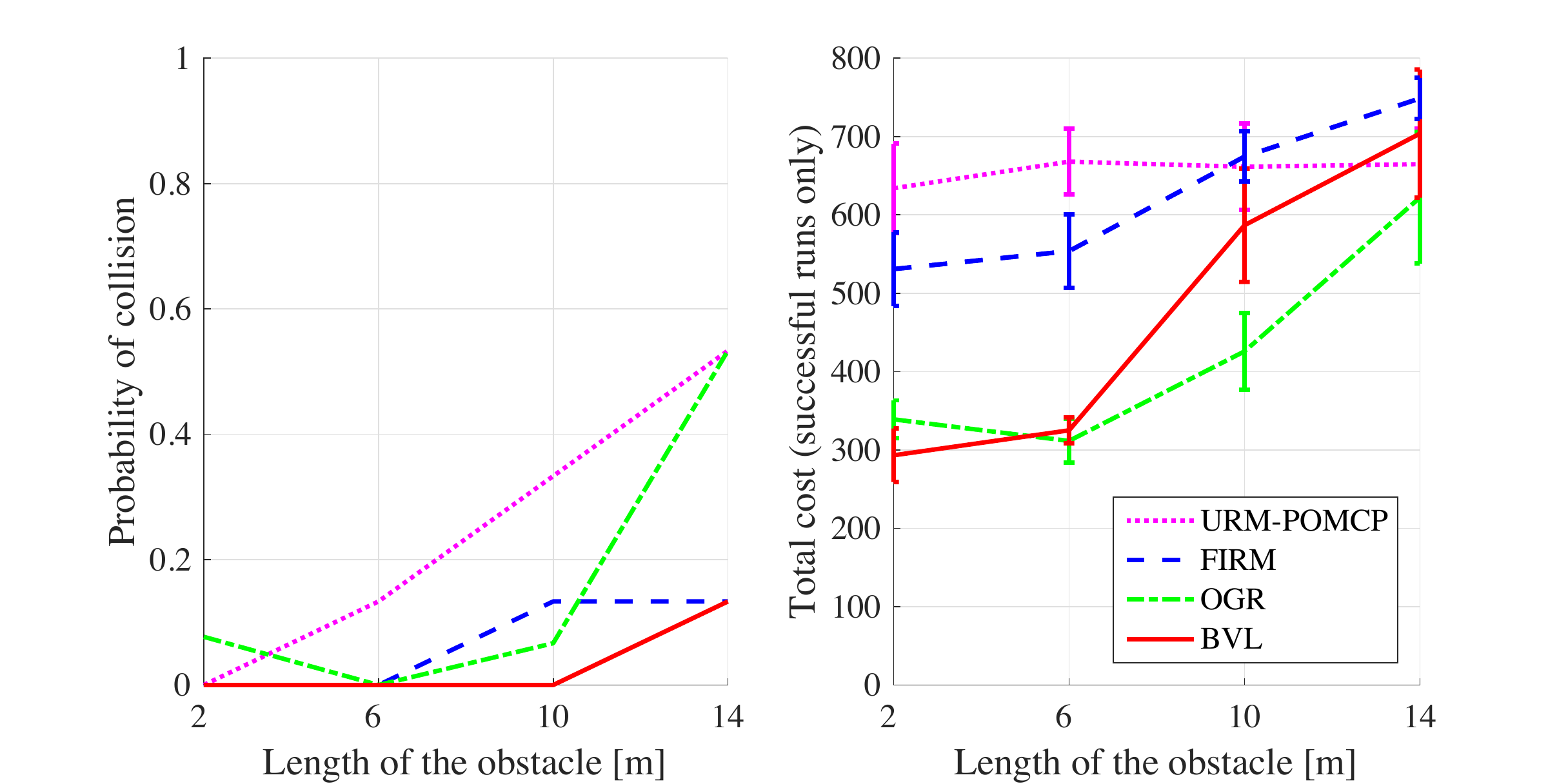}
    \caption{Plots of the probability of collision and the total cost over different obstacle lengths evaluated by running 20 Monte Carlo simulations of rover executing policies generated by the planners. Mean and standard deviation are computed using the successful (collision-free) executions only, which means that algorithms with higher collision probability will have much higher expected total cost when considering the collision penalty.}
    \label{fig:safety_plot}
\end{figure}

\revv{
In this work, the heuristic cost-to-go function $\tilde{J}(b, x)$ is implemented as $\tilde{J}(b, x) \approx \frac{d(b, b_g)}{\Delta \dot{x}_{max}} (\xi_p\tr({P_c}) + \xi_T \Delta t)$, where $d(b, b_g)$ is the Euclidean distance from the current belief state to the goal belief state, $\Delta \dot{x}_{max}$ is the (approximate) maximum velocity of the rover, and $P_c$ is the stationary covariance of $m \in \mathbb{X}$ for the current belief $b=(m,P)$.
This heuristic optimistically assumes that the belief can reach the goal by following the direct path at the maximum velocity without collision with the obstacles.
}  


To construct a finite action set for URM-POMCP, we utilize a uniformly distributed roadmap in belief space. Each point in the uniformly distributed roadmap serves as the target point of a time-varying LQG controller, so that the controller can generate control inputs for a belief to move toward the point.
This enables POMCP to utilize the Gaussian belief model in generating control inputs and updating the belief from observations.
\rev{
Additionally, we penalize the actions to stay at the same state to prevent the robot from getting stuck at local minima indefinitely.
}

\subsection{Scalability in Planning Horizon}
Bi-directional learning of the value function enables the proposed planner to scale to infinite-horizon planning problems with terminal state.
To compare this scalability with the baseline methods, we consider $\rnp{}_{\obswall{}}(e,o)$ problems shown in Fig.~\ref{fig:wall}, where $e$ is the length of the environment and $o$ is the length of the obstacle shown in the figure.

Notice that in Fig.~\ref{fig:scalability}, as the obstacle gets larger, the local minimum gets deeper and the performance of URM-POMCP becomes worse.
The number of time steps to get to the goal grows exponentially for URM-POMCP, while it grows linearly for \ouralgo{} and others.
This shows the effectiveness of guidance by long-range solver's global policy in larger problems as opposed to a naive heuristic guidance in URM-POMCP.

\begin{figure}[t]
	\centering
	\begin{subfigure}{0.4\columnwidth}
		\centering
		\includegraphics[width=\textwidth]{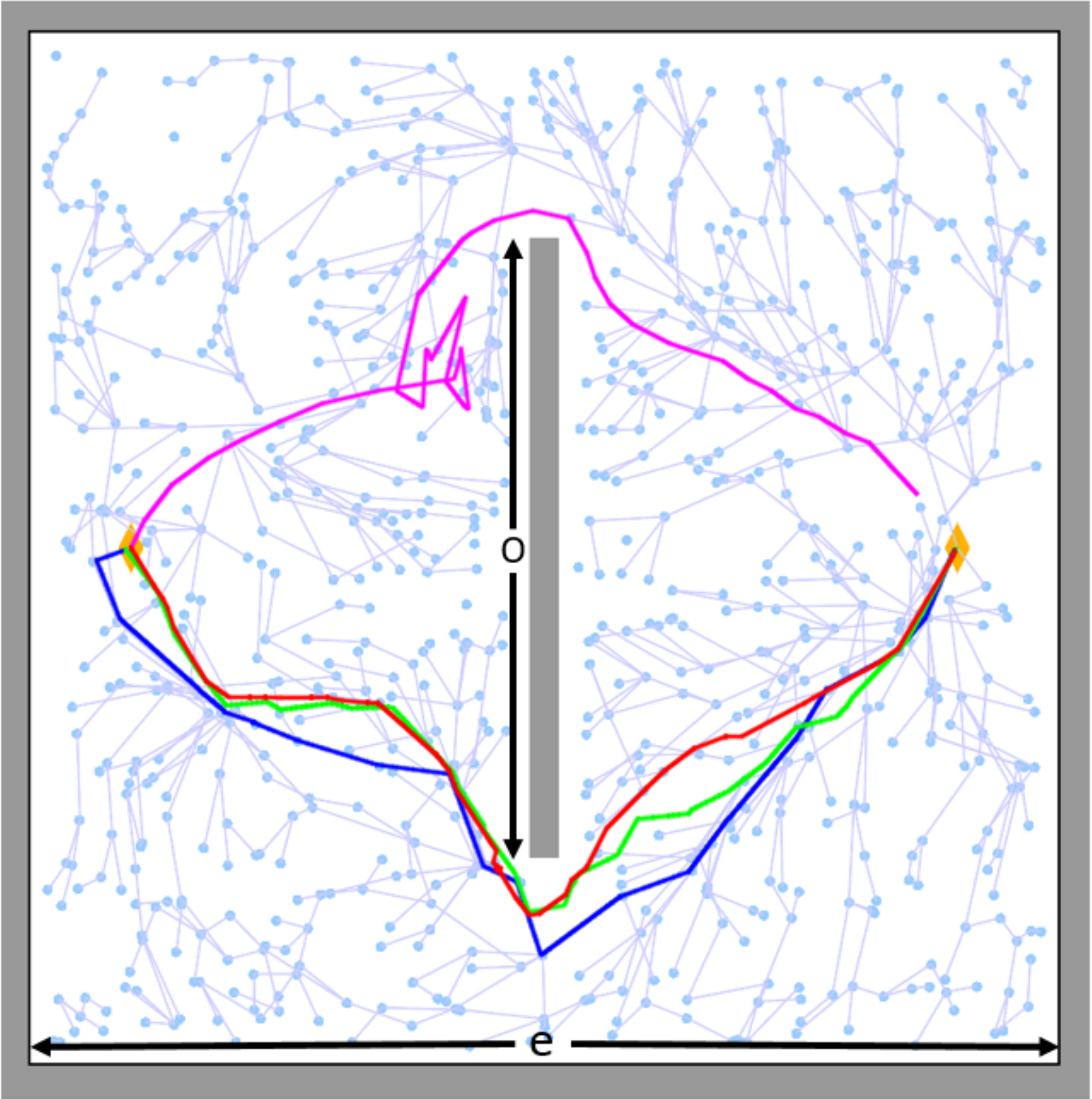}
		\caption{$\rnp{}_{\obswall{}}(20,10)$}
	\end{subfigure}
	\,
	\begin{subfigure}{0.4\columnwidth}
		\centering
		\includegraphics[width=\textwidth]{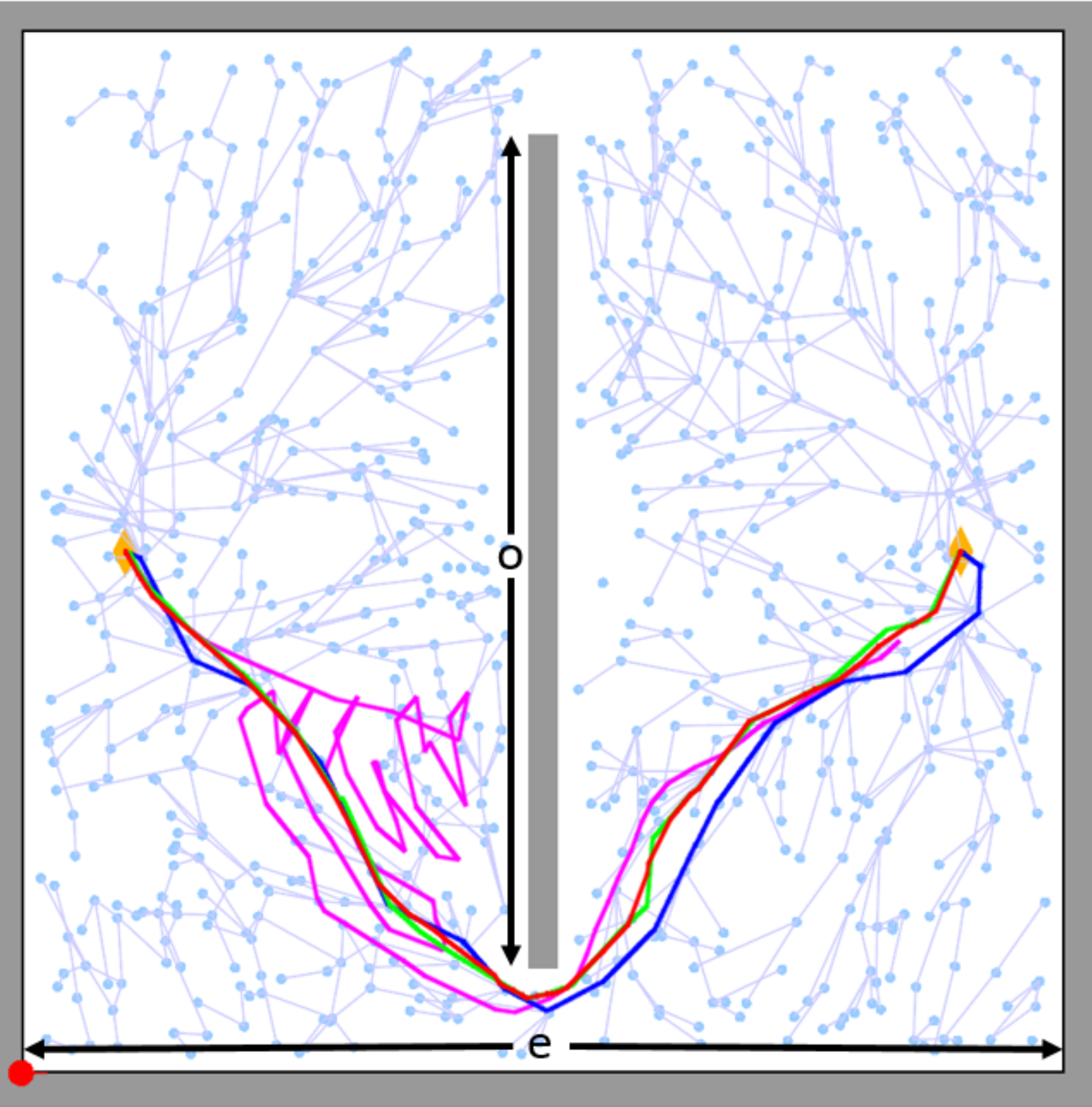}
		\caption{$\rnp{}_{\obswall{}}(20,16)$}
	\end{subfigure}
	\caption{Execution trajectories of URM-POMCP (pink), FIRM (blue), OGR (green), and \ouralgo{} (red).
    The start and the goal states are on the left and the right of the wall, respectively,
    \revv{and the underlying tree in light blue depicts the global policy computed by FIRM.}
  }
	\label{fig:wall}
\end{figure}

\begin{figure}[t]
        \begin{minipage}{0.42\columnwidth}
		\includegraphics[height=4cm]{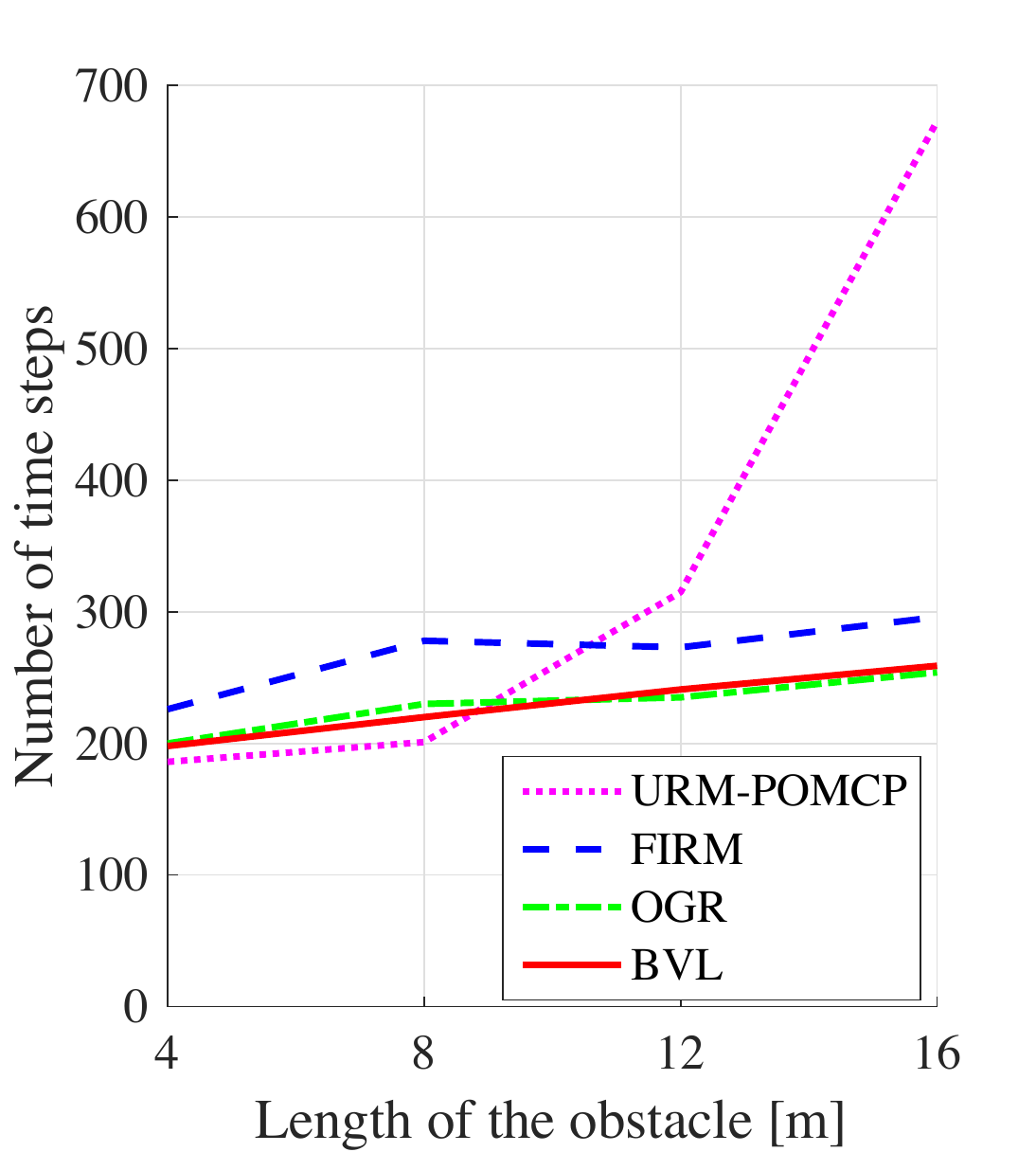}
		\end{minipage}
		\begin{minipage}{0.58\columnwidth}
	\caption{Plot for scalability tests. A longer wall induces deeper local minima for URM-POMCP due to its heuristic cost-to-go estimation.
	URM-POMCP performs worse as having more local minima, but other methods with FIRM's approximate cost-to-go are less affected.}
		\vspace{-0.5cm}
	\label{fig:scalability} 
    \end{minipage}
\end{figure}
\subsection{Optimality}

\begin{figure}[t]
	\centering
	\begin{subfigure}{0.4\columnwidth}
		\centering
		\includegraphics[width=\textwidth]{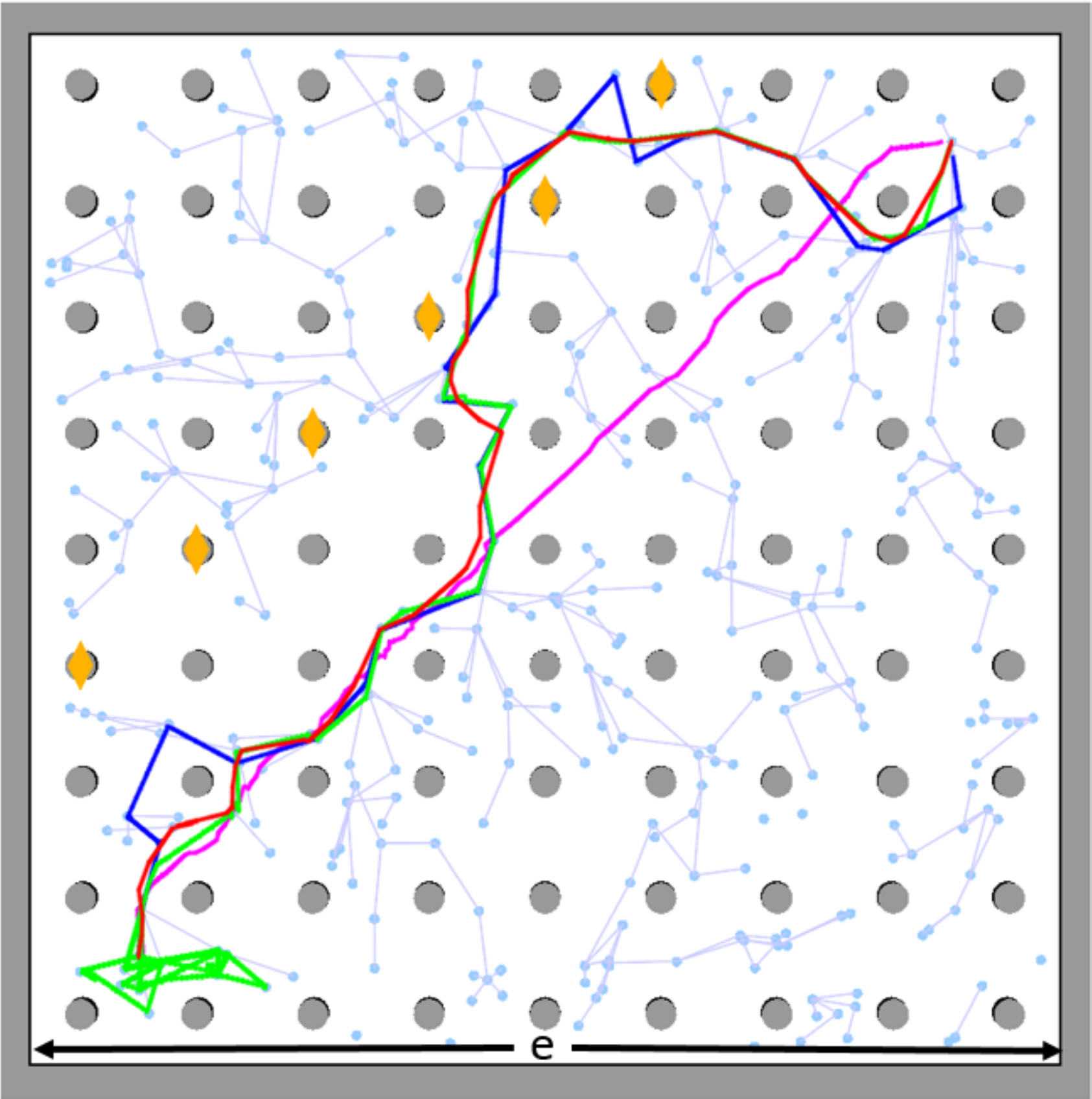}
		\caption{With 350 FIRM nodes}
	\end{subfigure}
	\,
	\begin{subfigure}{0.4\columnwidth}
		\centering
		\includegraphics[width=\textwidth]{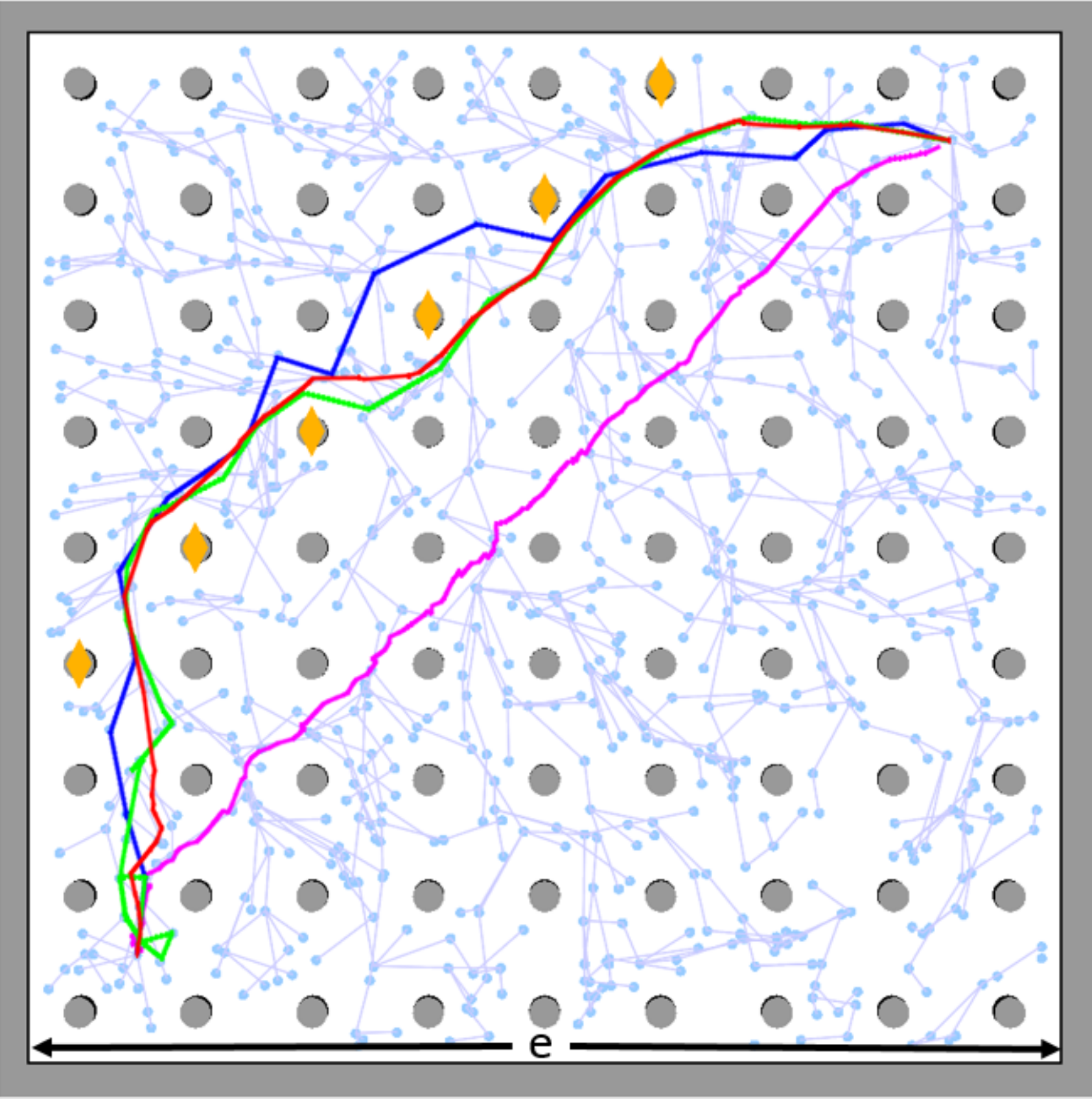}
		\caption{With 800 FIRM nodes}
	\end{subfigure}
	\caption{Execution trajectories for URM-POMCP (pink), FIRM (blue), OGR (green), and \ouralgo{} (red) on $\rnp{}_{\forest{}}(20,81)$ problem. The start and the goal states are on the left bottom and the right top of the map, respectively.
		While both OGR and \ouralgo{} take shortcuts instead of following FIRM's offline policy, OGR suffers from local minima near the start state.}
	\label{fig:nodedensity}
\end{figure}


\begin{figure}[t]
	\centering
		\includegraphics[clip,trim=2.0cm 0cm 2.0cm 0cm,
		height=4.2cm]{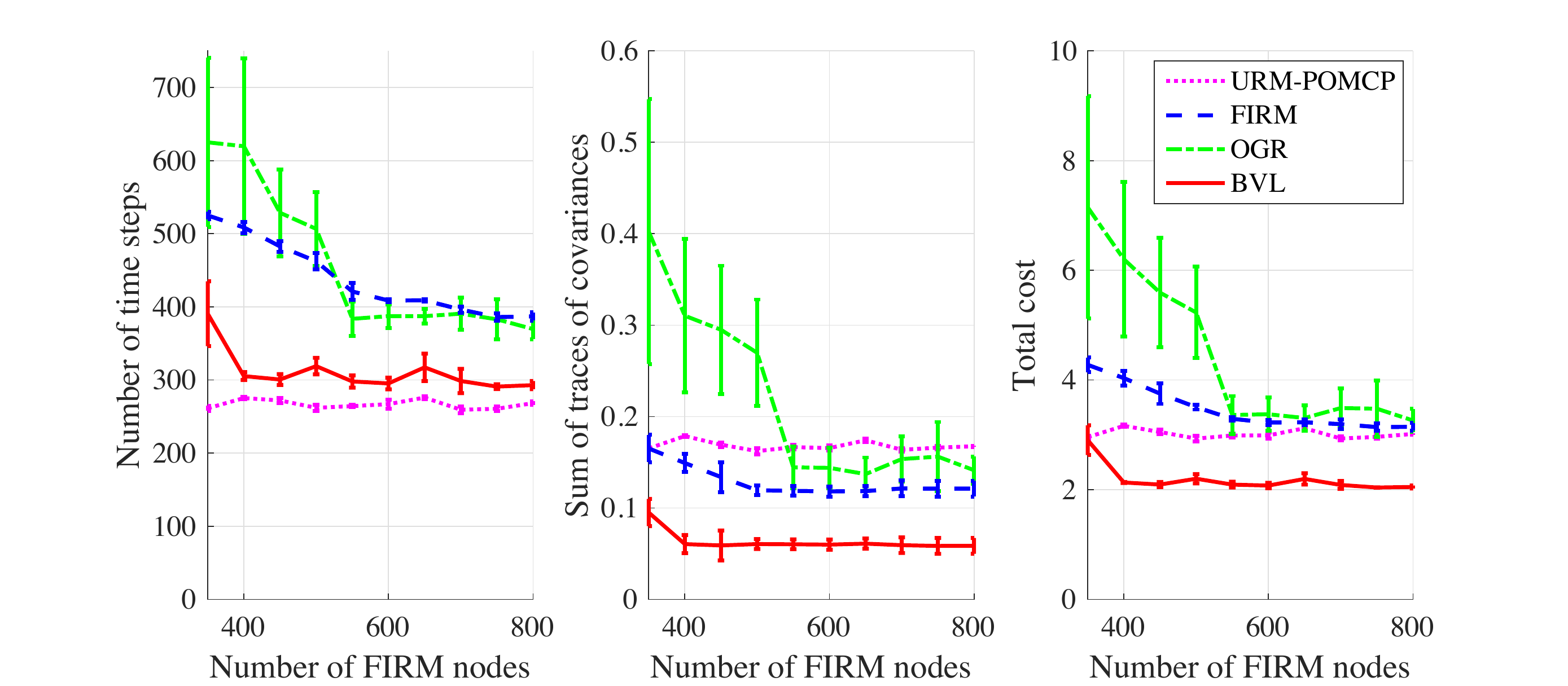}
		\caption{Plots for number of time steps, sum of traces of covariances, and total costs over different number of FIRM nodes.
			\ouralgo{} performs the best and is least affected by the FIRM node density.}
		\label{fig:plotnodedensity}
\end{figure}

The fundamental contribution of this method is to achieve policies that are closer to the globally optimal policies while reducing the risk of collisions over long horizons. 
To compare the optimality of the planners, we consider $\rnp{}_{\forest{}}(e,o)$ problems shown in Fig.~\ref{fig:nodedensity}, where $e$ represents the length of the environment and $o$ represents the number of obstacles.
We vary the density of the underlying belief graph to demonstrate its effect on the proposed method.

As can be seen in Fig.~\ref{fig:plotnodedensity}, the performance of the FIRM solution improves as the density of the underlying graph gets higher.
However, it will reach a maximum suboptimal bound due to its sampling-based nature (i.e., it requires stabilization of the belief to the stationary covariance of the graph nodes before leaving them).
In this complex environment, OGR with myopic online replanning frequently gets stuck at local minima, while it sometimes outperforms FIRM.
Its performance is brittle and subject to the coverage of the underlying belief graph.
%
In contrast, \ouralgo{} performs well even with a smaller number of nodes in the underlying graph.

While actions of the \ouralgo{} are selected from local controllers connecting to the nodes of the underlying belief graph, online belief tree search process fundamentally improves its behavior such that it is much less dependent on the density and coverage of the underlying graph. 
\ouralgo{} not only generates trajectories that are much closer to global optimum but also reduces the risk of collision over an infinite horizon.

\vspace{-4pt}
\section{Conclusion}

\begin{figure}[t]
        \begin{minipage}{0.56\columnwidth}
		\includegraphics[height=3.2cm]{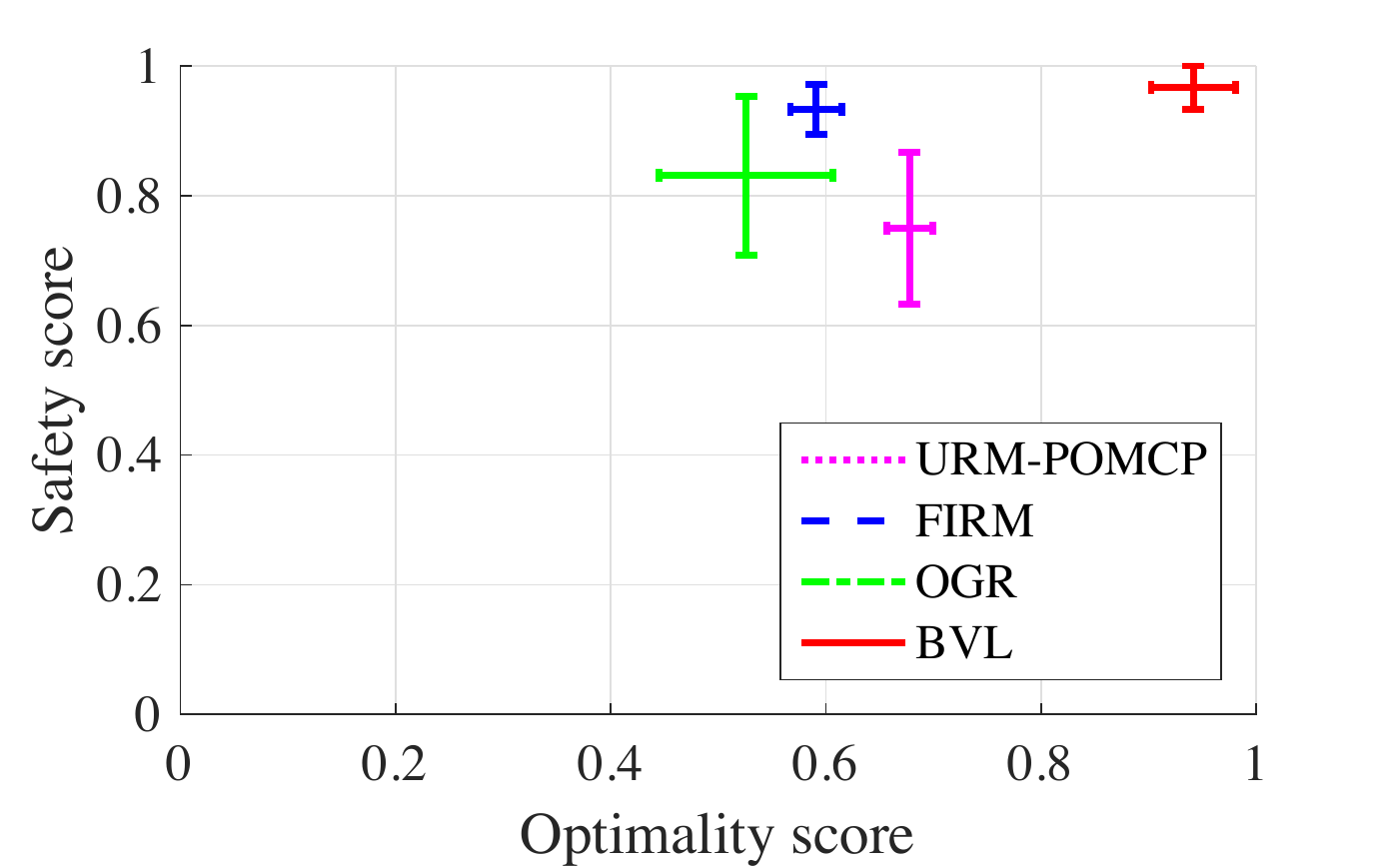}
		\end{minipage}
		\begin{minipage}{0.42\columnwidth}
		\caption{Plot for overall optimality and safety evaluation. 
		Optimality score is computed by dividing the minimal total cost over all runs by the individual total cost for $\rnp{}_{\forest{}}(e,o)$.
		Safety score is obtained by subtracting the probability of collision from $1$ for $\rnp{}_{\infotrap{}}(e,o)$.
		}
		\vspace{-0.4cm}
		\label{fig:plotoverall}
    \end{minipage}
\end{figure}

In this paper, we proposed \ouralgo{}, a novel bi-directional value learning algorithm that incorporates locally near-optimal forward search methods and globally safety-guaranteeing approximate long-range methods to solve challenging RAL-POMDP problems.
As shown in Fig.~\ref{fig:plotoverall}, \ouralgo{} provides better probabilistic safety guarantees than forward search methods (URM-POMCP) and is closer to the optimal performance than approximate long-range methods (FIRM).
It also shows more consistency in different environments compared to online graph-based rollout methods (OGR).

In future work, we will study the theoretical properties of this algorithm more rigorously and extend this work to more general and challenging robotic applications, such as mobile manipulation.
\rev{
We will also investigate another instance of \ouralgo{} using a heuristic search-based belief space planner
that can connect to the approximate global policy using multi-goal planning techniques.
}
\vspace{-5pt}

\vspace{-4pt}


%

%

%

\ifCLASSOPTIONcaptionsoff
  \newpage
\fi


\ifarxivfin
\IEEEtriggeratref{5}  
\fi


\bibliographystyle{IEEEtran}
\ifarxivfin
\bibliography{3_FIRMCP_WAFR}  
\else
\bibliography{AliAgha}
\fi
\end{document}